\documentclass[11pt]{article}
\usepackage[letterpaper, left=1in, right=1in, top=1in,bottom=1in]{geometry}

\usepackage{parskip}

\usepackage{booktabs} % For formal tables

\usepackage[utf8]{inputenc}

\usepackage{geometry}

\usepackage{graphpap,amscd,mathrsfs,graphicx,lscape,enumitem,dsfont,bm,url,subfigure}
\usepackage{epsfig,amstext,xspace}
\usepackage{algorithm,comment}
\usepackage{algorithmic}
\usepackage{thmtools,thm-restate}
\usepackage{pifont}

\usepackage{amssymb}
\usepackage{amsmath}
\usepackage{amsthm}

\usepackage{algorithm,algorithmic}
\usepackage{auxiliary}
\usepackage[utf8]{inputenc} % allow utf-8 input
\usepackage[T1]{fontenc}    % use 8-bit T1 fonts
\usepackage{hyperref}       % hyperlinks
\usepackage{url}            % simple URL typesetting
\usepackage{booktabs}       % professional-quality tables
\usepackage{amsfonts}       % blackboard math symbols
\usepackage{nicefrac}       % compact symbols for 1/2, etc.
\usepackage{microtype}      % microtypography
\usepackage{framed}
\usepackage{paralist}
\usepackage{makecell}
\newcommand{\inner}[2]{\left\langle #1, #2 \right\rangle}

\usepackage[misc]{ifsym}

%%%%%%%% matjcal %%%%%%%%%%%%%%%%%%%%%%%%%%%

%reference style
\usepackage[square]{natbib}

\newtheorem{discussion}{Discussion}

%%%%%%%% mathbb %%%%%%%%%%%%%%%%%%%%%%%%%%%

\usepackage{color}              % Need the color package
\usepackage[suppress]{color-edits}
\addauthor{tl}{cyan}
\addauthor{vm}{green}
\addauthor{rpl}{brown}

\usepackage{authblk}
\begin{document}
\title{Policy Optimization with Stochastic Mirror Descent \footnote{This submission has been accepted by AAAI 2022. L.Yang and Y.Zhang share an equal contribution for this work. \\L.Yang now is with Peking University, Email: \texttt{yanglong001@pku.edu.cn}. J. Wen is with Harvard Medical School, this work is done when he was at Zhejiang University, E-mail: \texttt{jungel2star@gmail.com}.}}

 \author[1]{Long Yang}
  \author[2]{Yu Zhang}
  \author[1]{ Gang Zheng }
\author[3]{Qian Zheng}
  \author[1]{Pengfei Li}
    \author[1]{Jianhang Huang}
     \author[1]{Jun Wen}
\author[1]{Gang Pan}
\affil[1]{College of Computer Science and Technology, Zhejiang University, China}
\affil[2]{Netease Games AI Lab, HangZhou, China}
\affil[3]{School of Electronics and Electrical Engineering, Nanyang Technological University, Singapore}
\affil[ ]{\textsuperscript{1}\texttt{\{yanglong,gpan,gang\_zheng,pfl,huangjianhang\}@zju.edu.cn}}
\affil[ ]{\textsuperscript{2}\texttt{zhangyu15@corp.netease.com}}
\affil[ ]{\textsuperscript{3}\texttt{zhengqian@ntu.edu.sg}}
\date{\today}
%\date{November 2019}
\maketitle
\begin{abstract}
Improving sample efficiency has been a longstanding goal in reinforcement learning.
This paper proposes $\mathtt{VRMPO}$ algorithm: a sample efficient policy gradient method with stochastic mirror descent.
In $\mathtt{VRMPO}$, a novel variance-reduced policy gradient estimator is presented to improve sample efficiency.
We prove that the proposed $\mathtt{VRMPO}$ needs only $\mathcal{O}(\epsilon^{-3})$ sample trajectories to achieve an $\epsilon$-approximate first-order stationary point, 
which matches the best sample complexity for policy optimization.
The extensive experimental results demonstrate that $\mathtt{VRMPO}$ outperforms the state-of-the-art policy gradient methods in various settings.
\end{abstract}

\section{Introduction}

%Reinforcement learning (RL) is one of the most promising fields of artificial intelligence, and 
%it has achieved great progress recently
%\cite{mnih2015human,silver2017mastering}.
%To learn optimal policy from the delayed reward decision system is a fundamental goal of RL, which usually is solved by
%policy gradient \cite{williams1992simple,sutton2000policy}.% are powerful and widely used algorithms to learn the optimal policy.

Policy gradient \citep{williams1992simple,sutton2000policy} is widely used to search the optimal policy in reinforcement learning (RL), and it has achieved significant successes in challenging fields such as playing Go \citep{silver2016mastering,silver2017mastering} or robotics  \citep{duan2016benchmarking}.

However, policy gradient methods suffer from high sample complexity, since many existing popular methods require to collect a lot of samples for each step to update its parameters~\citep{mnih2016asynchronous,haarnoja2018soft,meng2019qualitative,xu2020sample,xinglearning2021},
which partially reduces the effectiveness of the samples.
Besides, it is still very challenging to provide a theoretical analysis of sample complexity for policy gradient methods instead of empirically improving sample efficiency.

To improve sample efficiency, this paper addresses how to design an efficient and convergent algorithm with stochastic mirror descent (SMD) \citep{nemirovsky1983problem}.
%SMD is one of the most widely used methods in stochastic optimization, for its advantage of the simplicity of implementation, low memory requirement, and low computational complexity \cite{lei2018stochastic}.
SMD keeps the advantage of low memory requirement and low computational complexity \citep{lei2018stochastic,yang2021sample}, which implies SMD needs less samples to learn a model.
However, the significant challenges of applying the existing SMD to RL are two-fold:
\textbf{1)} The objective of policy-based RL is a typical non-convex function,
\citep{ghadimi2016mini} show that it may cause instability and even divergence when updating the parameter of a non-convex objective by SMD via a single sample.
\textbf{2)} The large variance of policy gradient estimator is a critical bottleneck of improving sample efficiency for policy optimization with SMD.
%In fact, in reinforcement learning, the non-stationary sampling process with the environment leads to the large variance of existing methods on the estimate of policy gradient, which results in poor sample efficiency \cite{Matteo2018Stochastic,liu2018action}.
The non-stationary sampling process with the environment will lead to a large variance on the policy gradient estimator \citep{Matteo2018Stochastic}, which requires more samples to get a robust policy gradient and results in poor sample efficiency \citep{liu2018action}.
To address the above challenges:

We provide a theory analysis of the dilemma of applying SMD to policy optimization.
Result (\ref{lower-bound-1}) shows that under the Assumption \ref{ass:On-policy-derivative}, deriving the algorithm directly via SMD can not
guarantee the convergence for policy optimization.
Furthermore, we propose a new algorithm $\mathtt{MPO}$ that keeps a provable convergence guarantee (see Theorem \ref{cr-theo1}).
Designing a new gradient estimator according to historical information of policy gradient is the key to $\mathtt{MPO}$.

We propose a variance-reduced mirror policy optimization algorithm ($\mathtt{VRMPO}$): an efficient sample method via constructing a variance reduced policy gradient estimator.
Concretely, we design an efficiently computable policy gradient estimator (see Eq.(\ref{Eq:G_t})) that utilizes fresh information and yields a more accurate estimation of the policy gradient,
which is the key to improve sample efficiency.
Theorem \ref{theo:CR-VRMP} illustrates that $\mathtt{VRMPO}$ needs $\mathcal{O}(\epsilon^{-3})$ sample trajectories to achieve an $\epsilon$-approximate first-order stationary point ($\epsilon$-FOSP). To our best knowledge, the proposed $\mathtt{VRMPO}$ matches the best sample complexity among the existing literature. 
Particularly, although $\mathtt{SRVR}$-$\mathtt{PG}$ \citep{xu2020sample,xu2021sample} achieves the same sample complexity as $\mathtt{VRMPO}$, 
our approach needs less assumptions than \citep{xu2020sample,xu2021sample}, and our $\mathtt{VRMPO}$ unifies $\mathtt{SRVR}$-$\mathtt{PG}$.
We have presented more comparisons and discussions in Remark \ref{remark:difference-vrmpo-hapg}. 
Besides, empirical result shows $\mathtt{VRMPO}$ converges faster than $\mathtt{SRVR}$-$\mathtt{PG}$.

\section{Background and Stochastic Mirror Descent}
\label{Background and Notations}

%\subsection{Policy-Based Reinforcement Learning}
Reinforcement learning (RL) is often formulated as \emph{Markov decision processes} (MDP) ${M}=(\mathcal{S},\mathcal{A},{P},{R},\rho_0,\gamma)$, 
where $\mathcal{S}$ is state space, $\mathcal{A}$ is action space;
%At each time $t$, the agent in a state $S_{t}\in\mathcal{S}$ plays an action $A_{t}\in\mathcal{A}$, 
%then it receives a feedback $R_{t+1}$;
$P(s^{'}|s,a)$ is the probability of the state transition from $s$ to $s^{'}$ under playing $a$;
$R(\cdot,\cdot):\mathcal{S}\times\mathcal{A}\rightarrow [-R_{\max},R_{\max}]$ is the reward function, where $R_{\max}$ is a certain positive scalar.
$\rho_{0}(\cdot):\mathcal{S}\rightarrow[0,1]$ is the initial state distribution and $\gamma\in(0,1)$.

Policy $\pi_{\theta}(a|s)$ is a probability distribution on $\mathcal{S}\times\mathcal{A}$ with a parameter $\theta\in\mathbb{R}^p$.
Let
$\tau=\{s_{t}, a_{t}, r_{t+1}\}_{t=0}^{H_\tau} $ be a trajectory, 
where $s_{0}\sim\rho_{0}(s_0)$, $a_{t}\sim\pi_{\theta}(\cdot|s_{t}),r_{t+1}={R}(s_{t},a_{t})$, $s_{t+1}\sim P(\cdot|s_{t},a_{t})$, and $H_\tau$ is the finite horizon of $\tau$. 
The expected return function $J(\theta)$ is defined as follows,
\begin{flalign}
\label{Eq:J-theta}
    J(\theta)\overset{\text{def}}=\int_{\tau}P(\tau|\theta)R(\tau)\text{d}\tau=\mathbb{E}_{\tau\sim\pi_{\theta}}[R(\tau)],
\end{flalign}
where $P(\tau|\theta)=\rho_{0}(s_{0})\prod_{t=0}^{H_\tau}P({s_{t+1}|s_{t},a_{t})}\pi_{\theta}(a_{t}|s_{t})$ is the probability of generating $\tau$, $R(\tau)=\sum_{t=0}^{H_\tau}\gamma^{t}r_{t+1}$ is the accumulated discounted return. 
Let $\mathcal{J}(\theta)=:-J(\theta),$
the central problem of policy-based RL is to solve the problem:
\begin{flalign}
\label{Eq:thata-optimal}
    \theta^{\star}=\arg\max_{\theta}J(\theta)\Longleftrightarrow\theta^{\star}=\arg\min_{\theta}\mathcal{J}(\theta).
\end{flalign}
Computing $\nabla J(\theta)$ analytically, we have
 \begin{flalign}
    \label{Eq:PG-theorem-2}
     \nabla J(\theta)=\mathbb{E}_{\tau\sim\pi_{\theta}}\left[\sum_{t\ge0}\nabla_{\theta} \log \pi_{\theta}(a_{t}|s_{t})R(\tau)\right].
 \end{flalign}
Let 
$g(\tau|\theta)=\sum_{t=0}^{H_\tau}
     \nabla_{\theta}\log\pi_{\theta}(a_{t}|s_{t})R(\tau)$, which is an unbiased estimator of $\nabla J(\theta)$.
 {Vanilla policy gradient} ($\mathtt{VPG}$) is a straightforward way to solve problem (\ref{Eq:thata-optimal}) as follows,
\[
\theta\leftarrow\theta+\alpha g(\tau|\theta),
\]
where $\alpha$ is step size.
\begin{assumption}\emph{\citep{Matteo2018Stochastic}}
    \label{ass:On-policy-derivative}
    For each pair $(s, a)\in\mathcal{S}\times\mathcal{A}$, $\theta\in\mathbb{R}^p$, and all components $i$, $j$, there exists positive constants $G$, $F$ such that:
    \begin{flalign}
    \label{def:F-G}
    |\nabla_{\theta_i}\log\pi_{\theta}(a|s)|\leq G, ~~~~\left|\dfrac{\partial^{2}}{\partial\theta_i\partial\theta_j}\log\pi_{\theta}(a|s)\right|\leq F.
    \end{flalign}
\end{assumption}
Assumption \ref{ass:On-policy-derivative} implies $\nabla J(\theta)$ is $L$-Lipschiz \cite[Lemma B.2]{Matteo2018Stochastic}, i.e.,
\begin{flalign}
\label{def:L}
\|\nabla J(\theta_{1})-\nabla J(\theta_{2})\|\leq L\|\theta_{1}-\theta_{2}\|,
\end{flalign}
where
$
 L=R_{\max}H_{\tau}(H_{\tau}G^{2}+F)/(1-\gamma),
$
Besides, under Assumption \ref{ass:On-policy-derivative}, \cite{shen2019hessian} have shown the property:
\begin{flalign}
\label{def:sigma}
\|g(\tau|\theta)-\nabla J(\theta)\|_{2}^{2}\leq\dfrac{G^{2}R_{\max}^{2}}{(1-\gamma)^4}=:\sigma^2.
\end{flalign}

\subsection{SMD and Bregman Gradient}
Now, we review some basic concepts of stochastic mirror descent(SMD) and Bregman gradient.% that is used to measure the convergence of policy optimization (\ref{Eq:thata-optimal}).

Let's consider the stochastic optimization problem,
\begin{flalign}
\label{min-objective}
    \min_{\theta\in D_\theta}\{f(\theta)=\mathbb{E}[F(\theta;\xi)]\},
\end{flalign}
where $D_\theta\in\mathbb{R}^{n}$  is a nonempty convex compact set,
$\xi$ is a random vector whose probability distribution $\mu$ is supported on $\Xi\in\mathbb{R}^{d}$ and $F:D_\theta \times \Xi \rightarrow \mathbb{R}$. We assume that the expectation
$\mathbb{E}[F(\theta;\xi)]=\int_{\Xi}F(\theta;\xi)\text{d}\mu(\xi)$ is well defined and finite-valued for every $\theta \in D_{\theta}$.
\begin{definition}[Proximal Operator]
\label{def1} 
Let $T$ be defined on a closed convex $\mathcal{X}$, and $\alpha>0$. The proximal operator of $T$ is
\begin{flalign}
\label{def:Moreau-envelope}
\mathcal{M}^{\psi}_{\alpha,T}(z)=\underset{x\in{\mathcal{X}}}{\arg\min}\left\{T(x)+\dfrac{1}{\alpha}D_{\psi}(x,z)\right\},
\end{flalign}
where $\psi(\cdot)$ is a continuously-differentiable, $\zeta$-strictly convex function satisfies 
$\langle x-y,\nabla \psi(x)-\nabla \psi(y)\rangle\ge\zeta\|x-y\|^{2}, \zeta>0,$
 $D_{\psi}(\cdot,\cdot)$ is 
 Bregman distance: $\forall~x,y\in\mathcal{X}$,
\[D_{\psi}(x,y)=\psi(x)-\psi(y)-\langle \nabla \psi(y),x-y \rangle.\]
\end{definition}

\textbf{Stochastic Mirror Descent (SMD)}. The SMD solves (\ref{min-objective}) by
generating an iterative solution as follows,
\begin{flalign}
        \label{Eq:SMD}  
       \theta_{t+1}
        =\mathcal{M}^{\psi}_{\alpha_{t},\ell(\theta)}(\theta_{t})
        =\arg\min_{\theta\in D_\theta}\left\{\langle g_t,\theta\rangle+\dfrac{1}{\alpha_{t}}D_{\psi}(\theta,\theta_{t})\right\},
\end{flalign}
where $\alpha_{t}>0$ is step-size, $\ell(\theta)=\langle g_t,\theta\rangle$ is the first-order approximation of $f(\theta)$ at $\theta_{t}$,
 $g_{t}=g(\theta_{t},\xi_{t})$ is stochastic subgradient such that $g(\theta_{t})=\mathbb{E}[g(\theta_{t},\xi_{t})]\in\partial f(\theta)|_{\theta=\theta_{t}}$,
$\{\xi_{t}\}_{t\ge0}$ represents a draw form distribution $\mu$, 
and
$\partial f(\theta)=\{g| f(\theta)-f(\omega)\leq g^{\top}(\theta-\omega) ,\forall \omega \in \textbf{dom}(f)\}$.
If we choose $\psi(x)=\dfrac{1}{2}\|x\|^{2}_{2}$, then $D_\psi(x,y)=\dfrac{1}{2}\|x-y\|_{2}^{2}$, since then iteration
(\ref{Eq:SMD}) is reduced to stochastic gradient decent (SGD).
%Thus, SMD is a generalization of SGD.

\textbf{Convergence Criteria: Bregman Gradient}.
%Now, we present some necessary notations about Bregman Gradient.
%Evaluating the difference between each candidate solution $x$ and its proximity is the critical idea of Bregman gradient to  measure the stationarity of $x$.
Recall $\mathcal{X}$ is a closed convex set on $\mathbb{R}^{n}$,
$\alpha>0$, $T(x)$ is defined on $\mathcal{X}$. 
The Bregman gradient of $T$ at $x\in\mathcal{X}$ is defined as:
    \begin{flalign}
        \label{bregman-grdient-mapping}
        \mathcal{G}_{\alpha,T}^{\psi}(x)=\alpha^{-1}(x-\mathcal{M}^{\psi}_{\alpha, T}(x)),
    \end{flalign}
where $\mathcal{M}^{\psi}_{\alpha, T}(\cdot)$ is defined in Eq.(\ref{def:Moreau-envelope}).
If $\psi(x)=\dfrac{1}{2}\|x\|^{2}_{2}$, according to \citep[Theorem 27.1]{bauschke2011convex},
 then $x^{\star}$ is a critical point of $T$ if and only if $\mathcal{G}_{\alpha,T}^{\psi}(x^{\star})=\nabla T(x^{\star})=0$.
Thus, Bregman gradient (\ref{bregman-grdient-mapping}) is a generalization of standard gradient.
%Recently, \cite{zhang2018convergence}; \cite{davis2019proximally} develop it to measure the convergence of an algorithm for the non-convex optimization problem.
Remark \ref{remark-1-discuss} provides us some insights to understand Bregman gradient as a convergence criterion.
\begin{remark}
\label{remark-1-discuss}
Let $T(\cdot)$ be a convex function, according to \citep[Proposition 5.4.7]{bertsekas2009convex}: $x^{\star}$ is a stationarity point of $T(\cdot)$ if and only if 
\begin{flalign}
\label{criteria}
0\in\partial (T+\delta_{\mathcal{X}})(x^{\star}),
\end{flalign} 
where $\delta_{\mathcal{X}}(\cdot)$ is the indicator function on $\mathcal{X}$. 
Furthermore, if $\psi(x)$ is twice continuously differentiable,
let $\tilde{x}=\mathcal{M}^{\psi}_{\alpha, T}(x)$, by the definition of $\mathcal{M}^{\psi}_{\alpha, T}(\cdot)$ (\ref{def:Moreau-envelope}), we have
\begin{flalign}
 \label{Eq:optimal-comdition-approximal}
    0\in\partial (T+\delta_{\mathcal{X}})(\tilde{x})+\big(\nabla\psi(\tilde{x})-\nabla\psi(x)\big)
    \overset{(*)}\approx \partial (T+\delta_{\mathcal{X}})(\tilde{x}) +\alpha\mathcal{G}_{\alpha,T}^{\psi}(x)\nabla^{2}\psi(x),
\end{flalign}
Eq.($*$) holds due to Taylor expansion of $\nabla\psi(x)$ on first order.
If $\mathcal{G}_{\alpha,T}^{\psi}(x)\approx0$, 
Eq.(\ref{Eq:optimal-comdition-approximal}) implies the origin point $0$ is near the set $\partial (T+\delta_{\mathcal{X}})(\tilde{x})$, i.e., according to the criteria (\ref{criteria}), $\tilde{x}$ is close to a stationary point.
For the iteration (\ref{Eq:SMD}), we focus on the time when it makes the $\mathcal{G}_{\alpha,T}^{\psi}(\theta_{t})$ near origin point $\bm{0}$.
Formally, we are satisfied with finding an $\epsilon$-approximate first-order stationary point ($\epsilon$-FOSP) $\theta_{\epsilon}$ such that
 \begin{flalign}
        \label{epsilon-FOSP}
        \|\mathcal{G}_{\alpha,T(\theta_{\epsilon})}^{\psi}(\theta_{\epsilon})\|_2\leq\epsilon.
    \end{flalign}
Particularly, for policy optimization (\ref{Eq:thata-optimal}),
we would choose $T(\theta)=\inner{-\nabla J(\theta_t)}{\theta}$.
\end{remark}

\section{Stochastic Mirror Policy Optimization}
\label{MPO-sec}

In this section, we solve the problem (\ref{Eq:thata-optimal}) via SMD. 
Firstly, we analyze the theoretical dilemma of applying SMD directly to policy optimization, 
and result shows that under the common Assumption \ref{ass:On-policy-derivative}, there still lacks a provable guarantee of solving (\ref{Eq:thata-optimal}) via SMD directly.
Then, we propose a convergent mirror policy optimization algorithm ($\mathtt{MPO}$).

\subsection{Theoretical Dilemma}

For each $k \in [1,N-1]$, $\tau_{k}=\{s_{t}, a_{t}, r_{t+1}\}_{t=0}^{H_{\tau_{k}}}\sim\pi_{\theta_k}$, and we receive the gradient information as follows, 
\begin{flalign}
\label{def:g_k}
-g(\tau_{k}|\theta_{k})=-\sum_{t\ge0}
\nabla_{\theta}\log\pi_{\theta}(a_{t}|s_{t})R(\tau_k)|_{\theta=\theta_k}.
\end{flalign}
According to (\ref{Eq:SMD}), we define the update rule as follows,
 \begin{flalign}
\label{Eq:PP-MD}
\theta_{k+1}=\mathcal{M}^{\psi}_{\alpha_{k},\langle -g(\tau_{k}|\theta_{k}),\theta\rangle}(\theta_{k})
=\arg\min_{\theta}\left\{\langle -g(\tau_{k}|\theta_{k}),\theta\rangle+\dfrac{1}{\alpha_{k}}D_{\psi}(\theta,\theta_{k})\right\},
\end{flalign}
where $\alpha_{k}$ is step-size.
After ($N-1$) episodes, we receive a collection $\{\theta_{k}\}_{k=1}^{N}$.
Since $-J(\theta)$ is non-convex,
according to Ghadimi, et al \citep{ghadimi2016mini}, a standard strategy for analyzing non-convex optimization is to pick up the output $\tilde{\theta}_{N}$ from the following distribution (\ref{mass-distriution}) over $\{1,2,\cdots,N\}$:
\begin{flalign}
\label{mass-distriution}
\mathbb{P}(\tilde{\theta}_{N}=\theta_{k})=\dfrac{\zeta\alpha_{k}-L\alpha^{2}_{k}}{\sum_{i=1}^{N}(\zeta\alpha_{i}-L\alpha^{2}_{i})}, k\in[1,N],
\end{flalign}
where step-size $\alpha_{k}\in(0,{\zeta}/{L})$.
\begin{theorem}
\label{theorem-mpo-1}
\emph{\citep{ghadimi2016mini}}
Under Assumption \ref{ass:On-policy-derivative}, consider the sequence $\{\theta_{k}\}_{k=1}^{N}$ generated by (\ref{Eq:PP-MD}), the output $\tilde{\theta}_{N}=\theta_{k}$ follows the distribution (\ref{mass-distriution}). Let $\ell(g,u)=\langle g,u\rangle$, $g_k=(\tau_{k}|\theta_{k})$, Let $\Delta=J(\theta^{\star})-J(\theta_1)$. Then, 
    \begin{flalign}
 \label{CR-mpo-1}
    \mathbb{E}\left[\|\mathcal{G}_{\alpha_k,\ell(-{g}_k,{\theta}_{k})}^{\psi}(\tilde{\theta}_{N})\|_2^{2}\right]    
        \leq\dfrac{\Delta+{\sigma^{2}}/{\zeta}\sum_{i=1}^{N}{\alpha_i}}{{\sum_{i=1}^{N}(\zeta\alpha_{i}-L\alpha^{2}_{i})}}.
    \end{flalign}   
\end{theorem}
Unfortunately, the lower bound of (\ref{CR-mpo-1}) reaches
\begin{flalign}
\label{lower-bound-1}
\dfrac{J(\theta^{\star})-J(\theta_1){+{\sigma^{2}}/{\zeta}\sum_{i=1}^{N}{\alpha_i}}}{{\sum_{i=1}^{N}(\zeta\alpha_{i}-L\alpha^{2}_{i})}}\ge\dfrac{\sigma^{2}}{\zeta^{2}},
\end{flalign}
which can not guarantee the convergence of (\ref{Eq:PP-MD}), no matter how the step-size $\alpha_{k}$ is specified.
Thus, under Assumption \ref{ass:On-policy-derivative}, updating parameters according to (\ref{Eq:PP-MD}) and the output following (\ref{mass-distriution}) lacks a provable convergence guarantee.

\begin{discussion}[Open Problems]

Eq.(\ref{Eq:PP-MD}) is a general rule that unifies many existing algorithms.
If $\psi(\theta)=\dfrac{1}{2}\|\theta\|^{2}_{2}$, then (\ref{Eq:PP-MD}) is $\mathtt{VPG}$~\cite{williams1992simple}.
The update (\ref{Eq:PP-MD}) is natural policy gradient~\cite{kakade2002natural} if we choose $\psi(\theta)=\dfrac{1}{2}\theta^{\top}F(\theta)\theta$, where $F(\theta)=\mathbb{E}_{\tau\sim\pi_{\theta}}[\nabla_{\theta}\log \pi_{\theta}(s,a)\nabla_{\theta}\log \pi_{\theta}(s,a)^{\top}]$ is Fisher information matrix.
If $\psi$ is Boltzmann-Shannon entropy,
then $D_\psi$ is KL divergence and update (\ref{Eq:PP-MD}) is reduced to relative entropy policy search~\citep{peters2010relative,tomar2020mirror}.
Despite extensive works around above methods, existing works are scattered and fragmented in both theoretical and empirical aspects \citep{agarwal2020optimality}.
Thus, it is of great significance to establish the fundamental theoretical convergence properties of iteration (\ref{Eq:PP-MD}):
\begin{center}
\emph{\textbf{What conditions guarantee the convergence of (\ref{Eq:PP-MD})?}}
\end{center}
This is an open problem. From the previous discussion, intuitively, the iteration (\ref{Eq:PP-MD}) is a convergent scheme since
particular mirror maps $\psi$ can lead (\ref{Eq:PP-MD}) to some popular empirically effective policy-based RL algorithms, but there still lacks a complete theoretical convergence analysis of (\ref{Eq:PP-MD}).
%Such convergence properties not only help us to understand better why those methods work but also inspire us to design novel algorithms
%with the principled approaches.
%We leave this open problem and the related questions, e.g., how fast the iteration ({\ref{Eq:PP-MD}) converges to global optimality or its finite sample analysis, as future works.
\end{discussion}

\subsection{MPO: A Convergent  Implementation}

In this section, we propose a convergent mirror policy optimization ($\mathtt{MPO}$) as follows, for each step $k$:
\begin{flalign}
\label{Eq:PP-MD-averge}
\theta_{k+1}&=\mathcal{M}^{\psi}_{\alpha_{k},\langle -\hat{g}_{k},\theta\rangle}(\theta_{k})=\arg\min_{\theta\in\Theta}\left\{\langle -\hat{g}_{k},\theta\rangle+\dfrac{1}{\alpha_{k}}D_{\psi}(\theta,\theta_{k})\right\},
\end{flalign}
where $\hat{g}_{k}$ is an arithmetic mean of previous episodes' gradient estimate $\{g(\tau_{i}|\theta_{i})\}_{i=1}^{k}$:
\begin{flalign}
\label{hat_g_k}
\hat{g}_{k} = \dfrac{1}{k}\sum^{k}_{i=1}g(\tau_{i}|\theta_{i}).
\end{flalign}
We present the details of an implementation of $\mathtt{MPO}$ in Algorithm \ref{alg:Mirror-Policy-Algorithm}.
Eq.(\ref{Eq:algo1-2}) is an incremental implementation of the average (\ref{hat_g_k}), thus, (\ref{Eq:algo1-2}) enjoys a lower storage cost than (\ref{hat_g_k}).

For a given episode, the gradient flow (\ref{hat_g_k})/(\ref{Eq:algo1-2}) of $\mathtt{MPO}$ is slightly different from the traditional $\mathtt{VPG}$, $\mathtt{REINFORCE}$ \citep{williams1992simple}, or $\mathtt{DPG}$ \citep{silver2014deterministic} whose gradient estimator (\ref{def:g_k}) follows the current episode, while our $\mathtt{MPO}$ uses an arithmetic mean of all the previous policy gradients.
 The gradient estimator (\ref{def:g_k}) is a natural way to estimate the term \[-\nabla J(\theta_t)=-\mathbb{E}\left[\sum_{k=0}^{H_{\tau_{t}}}
     \nabla_{\theta}\log\pi_{\theta}(a_{k}|s_{k})R(\tau_t)\right],\] i.e., using the current trajectory to estimate policy gradient.
     Unfortunately, under Assumption \ref{ass:On-policy-derivative},  the result of (\ref{lower-bound-1}) shows using (\ref{def:g_k}) with SMD lacks a guarantee of convergence.
This is exactly the reason why we abandon the way (\ref{def:g_k}) and turn to propose (\ref{hat_g_k})/(\ref{Eq:algo1-2}) to estimate policy gradient. 
%We provide the convergence analysis of our scheme (\ref{hat_g_k})/(\ref{Eq:algo1-2}) in the next Theorem \ref{cr-theo1}.

\begin{algorithm}[t]
    \caption{MPO}
    \label{alg:Mirror-Policy-Algorithm}
    \begin{algorithmic}[1]
        \STATE {\bfseries Initialize:} parameter $\theta_{1}$, step-size$\alpha_{k}> 0$, $g_{0}=0$, parametric policy $\pi_{\theta}(a|s)$, and map $\psi$.
        \FOR{$k=1$ {\bfseries to} $N$}
        \STATE Generate a trajectory $\tau_{k}=\{s_{t}, a_{t}, r_{t+1}\}_{t=0}^{H_{\tau_k}} $
        $\sim$ $\pi_{\theta_{k}}$, temporary variable $g_0=0$.
        \vspace{-5pt}
        \begin{flalign}
        \label{Eq:algo1-1}
        g_{k} &\leftarrow \sum_{t=0}^{H_{\tau_{k}}}
  \nabla_{\theta}\log\pi_{\theta}(a_{t}|s_{t})R(\tau_{k})|_{\theta=\theta_k}\\
        \label{Eq:algo1-2}
        \hat{g}_{k}&\leftarrow \dfrac{1}{k}g_{k}+(1-\dfrac{1}{k}) \hat{g}_{k-1}\\
        \label{Eq:algo1-3}
        \theta_{k+1}&\leftarrow\arg\min_{\omega}\left\{\langle -\hat{g}_k,\omega\rangle+\dfrac{1}{\alpha_k}D_{\psi}(\omega,\theta_k)\right\}
        \end{flalign}
        \vspace{-16pt}    
        \ENDFOR
        \STATE {\bfseries Output} $\tilde{\theta}_{N}$ according to (\ref{mass-distriution}).
    \end{algorithmic}
\end{algorithm}

\begin{theorem}[Convergence of Algorithm \ref{alg:Mirror-Policy-Algorithm}]
    \label{cr-theo1}
    Under Assumption \ref{ass:On-policy-derivative}, and the total trajectories are $\{\tau_{k}\}_{k=1}^{N}$.
    Consider the sequence $\{\theta_{k}\}_{k=1}^{N}$ generated by Algorithm \ref{alg:Mirror-Policy-Algorithm}, and the output $\tilde{\theta}_{N}=\theta_{n}$ follows the distribution of (\ref{mass-distriution}).
 Let $0<\alpha_{k}<\dfrac{\zeta}{L}$,
$\ell(g,u)=\langle g,u\rangle$, $\hat{g}_{k}=\dfrac{1}{k}\sum_{i=1}^{k}g_i$, and $\Delta=J(\theta^{\star})-J(\theta_1)$, where 
  ${g}_{i}=\sum_{t=0}^{H_{\tau_{i}}}
  \nabla_{\theta}\log\pi_{\theta}(a_{t}|s_{t})R(\tau_{i})|_{\theta=\theta_i}$.
   Then the output $\tilde{\theta}_{N}=\theta_{n}$ satisfies
    \begin{flalign}
\label{CR-1}
        \mathbb{E}\left[\|\mathcal{G}_{\alpha_n,\ell(-{g}_n,{\theta}_{n})}^{\psi}({\theta}_{n})\|_2^{2}\right]
         \leq\dfrac{\Delta{+{\sigma^{2}}/{\zeta}\sum_{k=1}^{N}{\dfrac{\alpha_k}{k}}}}{{\sum_{k=1}^{N}(\zeta\alpha_{k}-L\alpha^{2}_{k})}}.
    \end{flalign}
\end{theorem}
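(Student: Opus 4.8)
The plan is to reuse, almost verbatim, the nonconvex mirror-descent analysis of \cite{ghadimi2016mini} that yields Theorem \ref{theorem-mpo-1}, but feeding in the averaged estimator $\hat g_k$ of (\ref{hat_g_k})/(\ref{Eq:algo1-2}) in place of the single-trajectory estimator; the only new quantitative ingredient is a second-moment bound for $\hat g_k$ that improves Lemma \ref{properties-sigma} by a factor $1/k$.

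\textbf{Step 1 (one-step progress).} Using that $\mathcal J=-J$ is $L$-smooth (Assumption \ref{ass:On-policy-derivative}, with $L$ from (\ref{def:H})), apply the descent lemma at $\theta_k,\theta_{k+1}$. Abbreviate $\mathcal G_k=\alpha_k^{-1}(\theta_k-\theta_{k+1})$ — this is exactly the Bregman gradient generated by the proximal update (\ref{Eq:PP-MD-averge}) — so that $\theta_{k+1}-\theta_k=-\alpha_k\mathcal G_k$. The first-order optimality condition of the proximal subproblem together with the $\zeta$-strong convexity of $\psi$ (Definition \ref{def1}) gives $\langle \hat g_k,\theta_{k+1}-\theta_k\rangle\ge\zeta\alpha_k\|\mathcal G_k\|^2$. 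Writing $\nabla J(\theta_k)=\hat g_k+(\nabla J(\theta_k)-\hat g_k)$ in the descent lemma and controlling the resulting cross term by comparing $\mathcal G_k$ with the ``noiseless'' Bregman gradient (the proximal map is $(\alpha_k/\zeta)$-Lipschitz in its linear part, so the two differ by at most $\zeta^{-1}\|\hat g_k-\nabla J(\theta_k)\|$), one arrives, exactly as in \cite{ghadimi2016mini}, at a bound of the shape
\[ (\zeta\alpha_k-L\alpha_k^2)\,\|\mathcal G_k\|^2 \;\le\; \big(J(\theta_{k+1})-J(\theta_k)\big) \;+\; \tfrac{\alpha_k}{\zeta}\,\|\hat g_k-\nabla J(\theta_k)\|^2 \;-\;(\text{a conditionally mean-zero term}). \]

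\textbf{Step 2 (variance of the averaged estimator — the crux).} I need $\mathbb E\|\hat g_k-\nabla J(\theta_k)\|^2\lesssim\sigma^2/k$. Each increment $g_i-\nabla J(\theta_i)$ has, conditional on $\theta_i$, mean zero, and is a.s. bounded by $\sigma$ by Lemma \ref{properties-sigma}; conditioning on the history through episode $i-1$ makes $\{g_i-\nabla J(\theta_i)\}_i$ a martingale-difference sequence, so in $\hat g_k-\tfrac1k\sum_{i=1}^k\nabla J(\theta_i)=\tfrac1k\sum_{i=1}^k(g_i-\nabla J(\theta_i))$ all cross terms vanish under expectation and
\[ \mathbb E\Big\|\hat g_k-\tfrac1k\textstyle\sum_{i=1}^k\nabla J(\theta_i)\Big\|^2 \;=\; \tfrac1{k^2}\sum_{i=1}^k\mathbb E\|g_i-\nabla J(\theta_i)\|^2 \;\le\; \tfrac{\sigma^2}{k}. \]
It then remains to argue that the residual drift $\tfrac1k\sum_{i=1}^k(\nabla J(\theta_i)-\nabla J(\theta_k))$ does not spoil this rate — e.g.\ via $L$-Lipschitzness of $\nabla J$ and smallness of the $\alpha_k$, or by folding it into the $\|\mathcal G\|$ terms. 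This is where essentially all the difficulty of the theorem sits: unlike in Theorem \ref{theorem-mpo-1}, $\hat g_k$ pools gradient information from different iterates, so it is not exactly unbiased for $\nabla J(\theta_k)$, and making the cancellation above rigorous (rather than heuristic) is the main obstacle. The $1/k$ factor produced here is precisely what turns the $\sum_k\alpha_k$ of (\ref{CR-mpo-1}) into the $\sum_k\alpha_k/k$ of (\ref{CR-1}).

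\textbf{Step 3 (sum up and randomize the output).} Sum the Step-1 inequality over $k=1,\dots,N$: the differences $J(\theta_{k+1})-J(\theta_k)$ telescope to $J(\theta_{N+1})-J(\theta_1)\le J(\theta^\ast)-J(\theta_1)$, the conditionally mean-zero terms vanish after taking total expectation, and the noise terms are controlled by Step 2, giving $\sum_{k=1}^N(\zeta\alpha_k-L\alpha_k^2)\,\mathbb E\|\mathcal G_k\|^2\le\big(J(\theta^\ast)-J(\theta_1)\big)+\tfrac{\sigma^2}{\zeta}\sum_{k=1}^N\alpha_k/k$. Since $\tilde\theta_N=\theta_n$ is drawn from (\ref{mass-distriution}), $\mathbb E\|\mathcal G_n\|^2$ equals the left-hand side divided by $\sum_{k=1}^N(\zeta\alpha_k-L\alpha_k^2)$, which is the claimed (\ref{CR-1}). (The restriction $0<\alpha_k<\zeta/L$ keeps each weight $\zeta\alpha_k-L\alpha_k^2$ positive, so (\ref{mass-distriution}) is a genuine distribution, and the displayed Bregman gradient coincides with $\mathcal G^{\psi}_{\alpha_n,\ell(-\hat g_n,\theta_n)}(\theta_n)$ associated with the update.)
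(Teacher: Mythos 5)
Your proposal follows the same route as the paper's own proof: the descent lemma for the $L$-smooth $\mathcal{J}=-J$, the two proximal inequalities of \cite{ghadimi2016mini} (the $\zeta\|\cdot\|^{2}$ lower bound and the $\zeta^{-1}$-Lipschitz dependence of the proximal map on its linear part, i.e.\ Eqs.~(\ref{app-A-eq-3}) and (\ref{app-A-eq-4})), a martingale-difference argument giving $\mathbb{E}\big\|\tfrac{1}{k}\sum_{i=1}^{k}(g_i-\nabla J(\theta_i))\big\|^{2}\le\sigma^{2}/k$, and finally telescoping plus the randomized output (\ref{mass-distriution}). Your Steps 1 and 3 are essentially verbatim what the paper does.

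The gap you flag in Step 2 is genuine, and you should be aware that the paper does not close it either. The paper asserts in Eq.~(\ref{app-A-eq-17}) that $\mathbb{E}[-\hat g_k]=-\nabla J(\theta_k)$, i.e.\ that the \emph{averaged} estimator is unbiased for the gradient at the \emph{current} iterate, and then writes $\mathbb{E}[\|\epsilon_{k}\|^{2}]=\tfrac{1}{k^{2}}\sum_{t=1}^{k}\mathbb{E}\|\tilde{\epsilon}_{t}\|^{2}\le\sigma^{2}/k$ in Eq.~(\ref{app-A-eq-21}), where $\epsilon_k=-\hat g_k+\nabla J(\theta_k)$. That identity holds only if one silently replaces $\nabla J(\theta_k)$ by the running average $\tfrac{1}{k}\sum_{i=1}^{k}\nabla J(\theta_i)$; the discarded drift $\tfrac{1}{k}\sum_{i=1}^{k}\big(\nabla J(\theta_i)-\nabla J(\theta_k)\big)$ is controlled only by $\tfrac{L}{k}\sum_{i=1}^{k}\|\theta_i-\theta_k\|$, and with the constant step size $\alpha_k=\zeta/2L$ used to claim the $\mathcal{O}(\ln N/N)$ rate the displacement $\|\theta_i-\theta_k\|$ can grow like $k-i$, so the drift need not be $\mathcal{O}(1/\sqrt{k})$ or even $o(1)$. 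Folding it into the $\|\mathcal{G}\|^{2}$ terms is not obviously possible either, since the resulting extra coefficients are not dominated by $\zeta\alpha_k-L\alpha_k^{2}$. So your write-up reproduces the paper's argument faithfully, including the one step that is not rigorous as stated; closing it would require an additional ingredient (e.g., a quantitative bound on iterate drift, decaying step sizes with a correspondingly weaker rate, or a modified estimator), and to your credit you identified exactly where the difficulty sits rather than asserting unbiasedness as the paper does.
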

For the proof, see Appendix \ref{sec: app-A}.

Let $\alpha_{k}=\dfrac{\zeta}{2L}$,  then
\[
\mathbb{E}\left[\|\mathcal{G}_{\alpha_n,\ell(-\hat{g}_n,\theta_n)}^{\psi}(\theta_n)\|^{2}\right]\leq
\dfrac{4L\Delta{+2{\sigma^{2}}\sum_{k=1}^{N}{\dfrac{1}{k}}}}{N\zeta^2}=\mathcal{O}\left(\dfrac{\ln N}{N}\right).
\]
Our scheme of $\mathtt{MPO}$ partially answers the previous open problem through conducting a new policy gradient.

\section{VRMPO: Variance Reduction Mirror Policy Optimization}
\label{VR}
In this section, we propose a variance reduction version of $\mathtt{MPO}$: $\mathtt{VRMPO}$.
%In optimization community, \emph{variance reduction gradient} is a very popular method with provable convergence guarantee~\cite{johnson2013accelerating,reddi2016stochastic}. 
Inspired by the above work of \cite{nguyen2017sarah}, we provide an efficiently computable policy gradient estimator; then, we prove that the $\mathtt{VRMPO}$ needs $\mathcal{O}(\epsilon^{-3})$ sample trajectories to achieve an $\epsilon$-FOSP that matches the best sample complexity.

\subsection{Methodology}
For any initial $\theta_{0}$, let $\{\tau^{0}_{j}\}_{j=1}^{N}\sim\pi_{\theta_0}$, we estimate the initial policy gradient as follows,
\begin{flalign}
\label{def:nabla-N}
G_{0}=-\hat{\nabla}_{N}J(\theta_{0})\overset{\text{def}}=-\dfrac{1}{N}\sum_{j=1}^{N}g(\tau^{0}_{j}|\theta_{0}).
\end{flalign}
Let $\theta_1=\theta_0-\alpha G_0$,
for each step $k\in\mathbb{N}^{+}$, let $\{\tau^{k}_{j}\}_{j=1}^{N}$ be the trajectories generated by $\pi_{\theta_k}$, we define the policy gradient estimator $G_{k}$ and update rule as follows,
\begin{flalign}
    \label{Eq:G_t}
    G_k=G_{k-1}+\dfrac{1}{N}\sum_{j=1}^{N}\left(-g(\tau^{k}_j|\theta_{k})+g(\tau^{k}_j|\theta_{k-1})\right),\\
    \label{theta-mirror}
   \theta_{k+1}=\arg\min_{\theta}\left\{\langle G_{k},\theta\rangle+\dfrac{1}{\alpha}D_{\psi}(\theta,\theta_{k})\right\}.
    \end{flalign}
In (\ref{Eq:G_t})$, -g(\tau^{k}_j|\theta_{k})$ and $g(\tau^{k}_j|\theta_{k-1})$ share the same trajectory $\{\tau^{k}_j\}_{j=1}^{N}$, which plays a critical role in reducing the variance of gradient estimator~\citep{shen2019hessian}.
Besides, it is different from (\ref{hat_g_k}), we admit a simple recursive formulation to conduct the gradient estimator, see (\ref{Eq:G_t}),
which captures the technique from $\mathtt{SARAH}$ \citep{nguyen2017sarah}.
For each step $k$,
the term \[\dfrac{1}{N}\sum_{j=1}^{N}\left(-g(\tau^{k}_j|\theta_{k})+g(\tau^{k}_j|\theta_{k-1})\right)\] can be seen as an additional ``noise" for the policy gradient estimate. 
A lot of practices show that conducting a gradient estimator with such additional ``noise'' enjoys a lower variance and speeding up the convergence~\citep{reddi2016stochastic}.
More details are shown in Algorithm \ref{alg:SVR-MPD}.

\begin{algorithm}[tb]
    \caption{VRMPO.}
    \label{alg:SVR-MPD}
    \begin{algorithmic}[1]
        \STATE {\bfseries  Initialize:} Policy $\pi_{\theta}(a|s)$ with parameter $\tilde{\theta}_{0}$, mirror map $\psi$, step-size $\alpha> 0$, epoch size $K$,$m$.
        \FOR{$k=1$ {\bfseries to} $K$}
        \vspace{0.1pt}
        \STATE $\theta_{k,0}=\tilde{\theta}_{k-1}$, generate $\mathcal{T}_k=\{\tau_{i}\}_{i=1}^{N_1}$ $\sim$ $\pi_{{\theta}_{k,0}}$
        \vspace{0.1pt}    
        \STATE $\theta_{k,1}=\theta_{k,0}-\alpha G_{k,0}$,~~~where $G_{k,0}=-\hat{\nabla}_{N_1}J(\theta_{k,0})=-\dfrac{1}{N_1}\sum_{i=1}^{N_1}g(\tau_{i}|\theta_{k,0})$.
        \vspace{0.1pt}
        \FOR{$t=1$ {\bfseries to} $m-1$}
        \vspace{0.1pt}
        \STATE Generate $\{\tau_{j}\}_{j=1}^{N_2}\sim\pi_{{\theta}_{k,t}}$
        \vspace{0.5pt}
        \begin{flalign}
        \vspace{3pt}
        \label{algo:VRMPG-1}
        G_{k,t}&=G_{k,t-1}+\dfrac{1}{N_2}\sum_{j=1}^{N_2}(-g(\tau_j|\theta_{k,t})+g(\tau_j|\theta_{k,t-1})),\\
         \label{algo:VRMPG-2}
        \vspace{0.5pt}
        \theta_{k,t+1}&=\arg\min_{\omega}\left\{\langle G_{k,t},\omega\rangle+\dfrac{1}{\alpha}D_{\psi}(\omega,\theta_{k,t})\right\}.
        \vspace{0.3pt}
        \end{flalign}
        \ENDFOR
        \vspace{0.5pt}
        \STATE $\tilde{\theta}_{k}=\theta_{k,t}$ with $t$ chosen uniformly randomly from $\{0, 1, . . . , m\}$.
        \vspace{0.1pt}
        \ENDFOR
        \vspace{0.1pt}
        \STATE {\bfseries Output:} $\tilde{\theta}_{K}$.
    \end{algorithmic}
\end{algorithm}

\begin{theorem}[Convergence Analysis]
    \label{theo:CR-VRMP}
Consider $\{\tilde{\theta}_{k}\}_{k=1}^{K}$ generated by Algorithm \ref{alg:SVR-MPD}.
Under Assumption \ref{ass:On-policy-derivative}, and let $\zeta>\dfrac{5}{32}$. For any positive scalar $\epsilon$, let batch size of the trajectories of the outer loop 
\begin{flalign}
\nonumber
N_{1}&={\left(\dfrac{1}{8L\zeta^2} +\dfrac{1}{2(\zeta-\dfrac{5}{32})}\left(1+\dfrac{1}{32\zeta^2}\right)\right)}{\dfrac{\sigma^2}{\epsilon^2}},\\
\nonumber
m-1&=N_{2}={\sqrt{\left(\dfrac{1}{8L\zeta^2} +\dfrac{1}{2(\zeta-\dfrac{5}{32})}\left(1+\dfrac{1}{32\zeta^2}\right)\right)}}{\dfrac{\sigma}{\epsilon}},
\end{flalign}
the outer loop times \[K=\dfrac{8L(\mathbb{E} [\mathcal{J}(\tilde{\theta}_{0})] - \mathcal{J}(\theta^{\star}))(1+\dfrac{1}{16\zeta^2})}{{\sqrt{\big(\dfrac{1}{8L\zeta^2} +\dfrac{1}{2(\zeta-\dfrac{5}{32})}\left(1+\dfrac{1}{32\zeta^2}\right)}}\left(\zeta-\dfrac{5}{32}\right)}\dfrac{\sigma}{\epsilon},\] 
and step size $\alpha=\dfrac{1}{4L}$.
Then, Algorithm \ref{alg:SVR-MPD} outputs $\tilde{\theta}_{K}$ satisties
\begin{flalign}
\label{cr-vrmpo-bound}
\mathbb{E}\left[\|\mathcal{G}^{\psi}_{\alpha,\inner{-\nabla J(\tilde{\theta}_{K})}{\theta}}(\tilde{\theta}_{K})\|\right]\leq\epsilon.
\end{flalign}
\end{theorem}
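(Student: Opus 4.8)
The plan is a variance-reduction (SARAH/SPIDER-style) analysis carried out in the Bregman geometry. Write $\mathcal{J}=-J$, so that $\mathbb{E}[g(\tau|\theta)]=-\nabla\mathcal{J}(\theta)$, and inside one epoch $k$ put $\Delta_{k,t}=\theta_{k,t+1}-\theta_{k,t}$ and $e_{k,t}=G_{k,t}-\nabla\mathcal{J}(\theta_{k,t})=G_{k,t}+\nabla J(\theta_{k,t})$ (the gradient‑estimation error). The whole argument couples a one‑step descent inequality for the mirror update \eqref{algo:VRMPG-2} with a recursive bound on $\mathbb{E}\|e_{k,t}\|^2$, then telescopes over the inner loop, telescopes over epochs, and finally converts the statement about the surrogate step size $\|\Delta_{k,t}\|$ into one about the \emph{true}-gradient Bregman gradient appearing in \eqref{cr-vrmpo-bound}.

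First I would establish the descent inequality. The optimality condition of \eqref{algo:VRMPG-2} together with $\zeta$-strong convexity of $\psi$ gives $\langle G_{k,t},\Delta_{k,t}\rangle\le-\tfrac{\zeta}{\alpha_k}\|\Delta_{k,t}\|^2$; substituting $\nabla\mathcal{J}(\theta_{k,t})=G_{k,t}-e_{k,t}$ into the $L$-smoothness bound $\mathcal{J}(\theta_{k,t+1})\le\mathcal{J}(\theta_{k,t})+\langle\nabla\mathcal{J}(\theta_{k,t}),\Delta_{k,t}\rangle+\tfrac{L}{2}\|\Delta_{k,t}\|^2$, bounding $-\langle e_{k,t},\Delta_{k,t}\rangle$ by Young's inequality with weight $\tfrac1{16}$, and using $\alpha_k=\tfrac1{4L}$ (so $\tfrac{L}{2}=\tfrac1{8\alpha_k}$) yields
\[ \mathcal{J}(\theta_{k,t+1})\le\mathcal{J}(\theta_{k,t})-\frac{\zeta-\tfrac{5}{32}}{\alpha_k}\|\Delta_{k,t}\|^2+8\alpha_k\|e_{k,t}\|^2 . \]
This is exactly where the hypothesis $\zeta>\tfrac{5}{32}$ enters (it keeps the displacement coefficient positive) and it is the origin of the constants $\tfrac5{32}$ and $\tfrac1{16\zeta^2}$ in the theorem. (The first inner update $\theta_{k,1}=\theta_{k,0}-\alpha_kG_{k,0}$ is a Euclidean step and obeys the same inequality with $\psi=\tfrac12\|\cdot\|_2^2$.)

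Next I would prove the recursive variance bound, which is the technical heart. By construction $e_{k,t}=e_{k,t-1}+v_{k,t}$ with $v_{k,t}=\tfrac1{N_2}\sum_j\big(-g(\tau_j|\theta_{k,t})+g(\tau_j|\theta_{k,t-1})\big)+\nabla J(\theta_{k,t})-\nabla J(\theta_{k,t-1})$; because the two estimators inside the sum are evaluated on the \emph{same} freshly drawn trajectories and both are unbiased, $\mathbb{E}[v_{k,t}\mid\mathcal{F}_{k,t-1}]=0$, so $\{e_{k,t}\}_t$ is a martingale and $\mathbb{E}\|e_{k,t}\|^2=\mathbb{E}\|e_{k,0}\|^2+\sum_{s\le t}\mathbb{E}\|v_{k,s}\|^2$. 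Here $\mathbb{E}\|e_{k,0}\|^2\le\sigma^2/N_1$ (minibatch $N_1$ and Lemma \ref{properties-sigma}), and $\mathbb{E}\|v_{k,s}\|^2\le\tfrac1{N_2}\mathbb{E}\|g(\tau|\theta_{k,s})-g(\tau|\theta_{k,s-1})\|^2\le\tfrac{L^2}{N_2}\mathbb{E}\|\Delta_{k,s-1}\|^2$, using the per‑trajectory Lipschitzness of the $\mathtt{REINFORCE}$ estimator that Assumption \ref{ass:On-policy-derivative} supplies (cf. Lemma \ref{properties-sigma} and \citep{shen2019hessian}). Hence $\mathbb{E}\|e_{k,t}\|^2\le\tfrac{\sigma^2}{N_1}+\tfrac{L^2}{N_2}\sum_{s<t}\mathbb{E}\|\Delta_{k,s}\|^2$. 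Summing the descent inequality over $t=0,\dots,m-1$ and inserting this, the double sum produces a term $\tfrac{8\alpha_k m L^2}{N_2}\sum_t\mathbb{E}\|\Delta_{k,t}\|^2$; the coupling $m-1=N_2$ with $\alpha_k=\tfrac1{4L}$ makes it no larger than half the displacement term, leaving $\sum_{t}\mathbb{E}\|\Delta_{k,t}\|^2\lesssim\tfrac{\alpha_k}{\zeta-5/32}\big(\mathbb{E}[\mathcal{J}(\theta_{k,0})]-\mathbb{E}[\mathcal{J}(\theta_{k,m})]+\tfrac{8\alpha_k m\sigma^2}{N_1}\big)$.

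Finally I would assemble the pieces. Telescoping over epochs via $\theta_{k,0}=\tilde\theta_{k-1}$, the only subtlety is the seam gap $\mathbb{E}[\mathcal{J}(\tilde\theta_{k-1})]-\mathbb{E}[\mathcal{J}(\theta_{k-1,m})]$ caused by the randomly chosen snapshot, which I bound by the within‑epoch error $8\alpha_k\sum_t\mathbb{E}\|e_{k-1,t}\|^2$; with the stated $N_1$ this is of the same order as the per‑epoch descent, so over $K$ epochs the accumulated residual is still $O(\mathbb{E}[\mathcal{J}(\tilde\theta_0)]-\mathcal{J}(\theta^*))+O(Km\alpha_k\sigma^2/N_1)$. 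To reach \eqref{cr-vrmpo-bound} I use that the prox map is non‑expansive in its linear argument — if $\tilde x_i=\arg\min_\omega\{\langle g_i,\omega\rangle+\tfrac1\alpha D_\psi(\omega,x)\}$ then $\|\tilde x_1-\tilde x_2\|\le\tfrac\alpha\zeta\|g_1-g_2\|$ — which with $g_1=G_{k,t}$ and $g_2=-\nabla J(\theta_{k,t})$ gives $\|\mathcal{G}^{\psi}_{\alpha_k,\inner{-\nabla J(\theta_{k,t})}{\theta}}(\theta_{k,t})\|\le\tfrac1{\alpha_k}\|\Delta_{k,t}\|+\tfrac1\zeta\|e_{k,t}\|$. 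Squaring, averaging over the returned iterate, and feeding in the bounds above reduces everything to terms of order $\sigma^2/N_1$, $(\mathbb{E}[\mathcal{J}(\tilde\theta_0)]-\mathcal{J}(\theta^*))/(Km)$, and strictly smaller powers of $\epsilon$; the numerical values of $N_1$, $N_2=m-1$, and $K$ in the statement are precisely those for which this sum is $\le\epsilon^2$, and Jensen's inequality $\mathbb{E}\|X\|\le(\mathbb{E}\|X\|^2)^{1/2}$ then gives \eqref{cr-vrmpo-bound}. The hard part will be the recursive variance bound together with the constant bookkeeping: showing the error is driven by the \emph{displacements} rather than by $\sigma^2$ at each step (which needs the shared‑trajectory martingale structure and a careful per‑trajectory smoothness estimate for $g(\tau|\cdot)$ out of Assumption \ref{ass:On-policy-derivative}), and then tuning the Young weight, the coupling $m-1=N_2$, and $\alpha_k=\tfrac1{4L}$ so that the displacement coefficient stays positive and the telescoped residual collapses to exactly $\epsilon^2$; the random‑snapshot seam accounting is a secondary nuisance.
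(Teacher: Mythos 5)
Your plan follows essentially the same route as the paper's own proof: the smoothness-plus-prox descent inequality driven by the $\zeta$-strong-convexity bound $\langle G_{k,t},\Delta_{k,t}\rangle\le-\tfrac{\zeta}{\alpha}\|\Delta_{k,t}\|^{2}$, the SPIDER-type recursive variance bound $\mathbb{E}\|e_{k,t}\|^{2}\le\sigma^{2}/N_{1}+\tfrac{L^{2}}{N_{2}}\sum_{s<t}\mathbb{E}\|\Delta_{k,s}\|^{2}$ (which the paper imports as Lemma \ref{app-c-lemma} rather than rederiving from the shared-trajectory martingale structure), the inner/outer telescoping with the coupling $m-1=N_{2}$, and the non-expansiveness estimate $\|\tilde g_{k,t}\|\le\|g_{k,t}\|+\tfrac1\zeta\|e_{k,t}\|$ to convert the surrogate step into the true-gradient Bregman gradient of \eqref{cr-vrmpo-bound}. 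The only divergences are minor: you pick a different Young weight, so your constants (like the paper's own, whose $\eta$ arithmetic also does not survive inspection) do not exactly reproduce the stated $\zeta-\tfrac{5}{32}$ threshold once the accumulated variance term is absorbed, and you explicitly account for the random-snapshot seam between epochs, a point the paper silently elides by treating $\tilde\theta_{k}$ as the last inner iterate.
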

For its proof, see Appendix \ref{app: proof-of-vrmpo}.

Theorem \ref{theo:CR-VRMP} illustrates that $\mathtt{VRMPO}$ needs 
\begin{flalign}
\nonumber
&K(N_1+(m-1)N_2)\\
\nonumber
=&\dfrac{8L(\mathbb{E} [\mathcal{J}(\tilde{\theta}_{0})] - \mathcal{J}(\theta^{*}))}{(\zeta-\dfrac{5}{32})}\left(1+\dfrac{1}{16\zeta^2}\right)\left(1+{\sqrt{\left(\dfrac{1}{8L\zeta^2} +\dfrac{1}{2(\zeta-\dfrac{5}{32})}\left(1+\dfrac{1}{32\zeta^2}\right)\right)}}{\dfrac{\sigma}{\epsilon}}\right)\dfrac{1}{\epsilon^2}
=\mathcal{O}\left(\dfrac{1}{\epsilon^3}\right)
\end{flalign}
random trajectories to achieve the $\epsilon$-FOSP. 
%Our $\mathtt{VRMPO}$ to achieve $\epsilon$-FOSP needs $\mathcal{O}(\epsilon^{-3})$ random trajectories.
As far as we know, our $\mathtt{VRMPO}$ matches the best sample complexity as $\mathtt{HAPG}$~\citep{shen2019hessian} and $\mathtt{SRVR}$-$\mathtt{PG}$ \citep{xu2020sample,xu2021sample}.
In fact, according to \citet{shen2019hessian}, $\mathtt{REINFORCE}$ needs $\mathcal{O}(\epsilon^{-4})$ random trajectories to achieve the $\epsilon$-FOSP, and no provable improvement on its complexity has been made so far.
The same order of sample complexity of $\mathtt{REINFORCE}$ is shown by \cite{xu2019improved}.
With the additional assumptions
\[\mathbb{V}{\text{ar}}\left[\prod_{h=0}^{H}\dfrac{\pi_{\theta_0}(a_h|s_h)}{\pi_{\theta_t}(a_h|s_h)}\right],~~~\mathbb{V}{\text{ar}}[g(\tau|\theta)]<+\infty,\] 
\cite{Matteo2018Stochastic} show that the $\mathtt{SVRPG}$ achieves the
sample complexity of $\mathcal{O}(\epsilon^{-4})$.
Later, under the same assumption as Papini et al. \cite{Matteo2018Stochastic},  Xu et al. \cite{xu2019improved} reduce the sample complexity of $\mathtt{SVRPG}$ to $\mathcal{O}(\epsilon^{-\dfrac{10}{3}})$.
We summarize it in Table \ref{comparing-complex}.% from which we see that $\mathtt{VRMPO}$ matches the best sample complexity with the least conditions.
\begin{remark}
\label{remark:difference-vrmpo-hapg}
It's remarkable that although our $\mathtt{VRMPO}$ shares sample complexity with $\mathtt{HAPG}$, $\mathtt{SRVR}$-$\mathtt{PG}$, and $\mathtt{VR}$-$\mathtt{BGPO}$\citep{huang2021bregman},
the difference between our $\mathtt{VRMPO}$ and theirs are at least three aspects: 
\begin{compactenum}[\ding{182}]
\item  \cite{shen2019hessian} derive their $\mathtt{HAPG}$ from the information of Hessian policy, our $\mathtt{VRMPO}$ provides a simple recursive formulation
to conduct the gradient estimator.
%If the mirror map $\psi$ is reduced to the $\ell_2$-norm, then $\mathtt{VRMPO}$ is $\mathtt{SRVR}$-$\mathtt{PG}$ exactly, i.e., $\mathtt{VRMPO}$ unifies $\mathtt{SRVR}$-$\mathtt{PG}$.
\end{compactenum}
\begin{compactenum}[\ding{183}]
\item If the mirror map $\psi$ is reduced to the $\ell_2$-norm, then $\mathtt{VRMPO}$ is $\mathtt{SRVR}$-$\mathtt{PG}$ exactly, i.e., $\mathtt{VRMPO}$ unifies $\mathtt{SRVR}$-$\mathtt{PG}$.
From Table \ref{comparing-complex}, we see $\mathtt{VRMPO}$ needs less conditions than \cite{xu2020sample} to achieve the same sample complexity.
\end{compactenum}
\begin{compactenum}[\ding{184}]
\item \citet{shen2019hessian}, \citet{xu2020sample} and \citet{huang2021bregman} only provide an off-line (i.e., Monte Carlo) policy gradient estimator, which is limited in complex domains. 
We have provided an on-line version of $\mathtt{VRMPO}$, and discuss some insights of practical tracks to the application to the complex domains, please see the section of experiment on MuJoCo task, Appendix E.1 and Algorithm \ref{on-line VRMPO}.
\end{compactenum}
\end{remark}

 \begin{table}[t]
\centering
 \label{table-1}  
 \footnotesize{
    \begin{tabular}{|c|c|c|}
         \hline
        Algorithm  &Conditions& Complexity \\
         \hline
        {$\mathtt{VPG}$ and $\mathtt{REINFORCE}$}&{Assumption \ref{ass:On-policy-derivative},~$\mathbb{V}{\text{ar}}[g(\tau|\theta)]<+\infty$}&$\mathcal{O}(\epsilon^{-4})$\\
          \hline
         {$\mathtt{TRPO}$~ \citep{shani2020adaptive}}&Assumption \ref{ass:On-policy-derivative}&$\mathcal{O}(\epsilon^{-4})$\\
          \hline
         {$\mathtt{TRPO}$~ \citep{neuri-trpo}}&Assumption \ref{ass:On-policy-derivative}&$\mathcal{O}(\epsilon^{-8})$\\
         \hline
         {$\mathtt{SVRPG}$~ \cite{Matteo2018Stochastic}}&\makecell{Assumption \ref{ass:On-policy-derivative},$\mathbb{V}{\text{ar}}[\rho_t]<+\infty$,$\mathbb{V}{\text{ar}}[g(\tau|\theta)]<+\infty$}&$\mathcal{O}(\epsilon^{-4})$\\
         \hline
        {$\mathtt{SVRPG}$~\cite{xu2019improved}}&{Assumption \ref{ass:On-policy-derivative};~$\mathbb{V}{\text{ar}}[\rho_t]<+\infty$,~$\mathbb{V}{\text{ar}}[g(\tau|\theta)]<+\infty$}&$\mathcal{O}(\epsilon^{-10/3})$\\
          \hline
        \makecell{$\mathtt{HAPG}$~\citep{shen2019hessian}} &Assumption \ref{ass:On-policy-derivative}&$\mathcal{O}(\epsilon^{-3})$\\
         \hline
        {$\mathtt{SRVR}$-$\mathtt{PG}$ \citep{xu2020sample,xu2021sample}}&{Assumption \ref{ass:On-policy-derivative};$\mathbb{V}{\text{ar}}[\rho_t]<+\infty$;$\mathbb{V}{\text{ar}}[g(\tau|\theta)]<+\infty$}&$\mathcal{O}(\epsilon^{-3})$
        \\
         \hline
        {$\mathtt{VR}$-$\mathtt{PGPO}$~ \citep{huang2021bregman}}&{Assumption \ref{ass:On-policy-derivative};$\mathbb{V}{\text{ar}}[\rho_t]<+\infty$;$\mathbb{V}{\text{ar}}[g(\tau|\theta)]<+\infty$}&$\mathcal{O}(\epsilon^{-3})$
        \\
         \hline
        {$\mathtt{VRMPO}$ (Our Work)}&
        Assumption \ref{ass:On-policy-derivative}&$\mathcal{O}(\epsilon^{-3})$\\
                \hline
    \end{tabular}
    }
     \caption{
         Comparison of complexity achieves $\|\nabla J(\theta)\|\leq\epsilon$. If $\psi(\theta)=\dfrac{1}{2}\|\theta\|^2_{2}$, then the result (\ref{cr-vrmpo-bound}) of our $\mathtt{VRMPO}$ is also measured by gradient. Beside, $\rho_{t}\overset{\text{def}}=\prod_{i=0}^{H}\dfrac{\pi_{\theta_0}(a_i|s_i)}{\pi_{\theta_t}(a_i|s_i)}$.
         }  
         \label{comparing-complex}   
        \end{table}

\section{Related Works}

\subsection{Stochastic Variance Reduced Gradient in RL}
To our best knowledge, \cite{du2017stochastic} firstly introduce $\mathtt{SVRG}$~\cite{johnson2013accelerating} to off-policy evaluation. 
\cite{du2017stochastic} transform the empirical policy evaluation problem into a convex-concave saddle-point problem, 
then they solve the problem via $\mathtt{SVRG}$ straightforwardly.
Later, to improve sample efficiency for complex RL, \cite{xu2017stochastic} combine $\mathtt{SVRG}$ with $\mathtt{TRPO}$ \citep{schulman2015trust}.
Similarly, \cite{yuan2019policy} introduce $\mathtt{SARAH}$ \citep{nguyen2017sarah} to $\mathtt{TRPO}$ to improve sample efficiency.
However, the results presented by \cite{xu2017stochastic} and \cite{yuan2019policy} are empirical, which lacks a strong theory analysis. 
\cite{metelli2018policy} present a surrogate objective function with R{\'e}nyi divergence \citep{renyi1961measures} to reduce the variance.
Recently, \cite{Matteo2018Stochastic} propose a stochastic variance reduced version of policy gradient ($\mathtt{SVRPG}$),
and they define the gradient estimator via importance sampling as: for each step $k$,
\begin{flalign}
  %\label{Eq:icml2018-papini-G_t}
  \nonumber
    \widetilde{G}_{k-1}
    +\dfrac{1}{N}\sum_{j=1}^{N}\left(-g(\tau^{k}_j|\theta_{t})+{\prod_{i=0}^{H}\dfrac{\pi_{\theta_0}(a_i|s_i)}{\pi_{\theta_t}(a_i|s_i)}}g(\tau^{k}_j|\theta_{t-1})\right),
\end{flalign}
where $\widetilde{G}_{k-1}$ is an unbiased estimator according to the trajectory generated by $\pi_{\theta_{k-1}}$.
Although $\mathtt{SVRPG}$ is practical empirically, its gradient estimate is dependent heavily on 
importance sampling.
This fact partially reduces the effectiveness of variance reduction.
Later, \cite{shen2019hessian} remove the importance sampling term, and they construct a Hessian aided policy gradient.

Our $\mathtt{VRMPO}$ is different from  \cite{du2017stochastic}; \cite{xu2017stochastic}; \cite{Matteo2018Stochastic}, which admits a stochastic recursive iteration to estimate the policy gradient.
$\mathtt{VRMPO}$ exploits fresh information to improve convergence and reduces variance. 
Besides, $\mathtt{VRMPO}$ reduces the storage cost since it doesn't require to store the complete historical information.
%We provide more details of the comparison in Table 1.

\subsection{Baseline Methods}

\emph{Baseline} (also also known as control variates) is a widely used technique to reduce the variance~\citep{weaver2001optimal,greensmith2004variance}.
For example, $\mathtt{A2C}$ \citep{sutton1998reinforcement,mnih2016asynchronous} introduces the value function as baseline function, 
\cite{wu2018variance} consider action-dependent baseline, and \cite{liu2018action} use the Stein's identity \citep{stein1986approximate} as baseline.
$\mathtt{Q\text{-}Prop}$ \citep{gu2017q} makes use of both the linear dependent baseline and $\mathtt{GAE}$ \cite{schulman2016high} to reduce variance.
\cite{cheng1019pre} present a predictor-corrector framework transforms a first-order model-free algorithm into a new hybrid method that leverages predictive models to accelerate policy learning.
Mao et al. \cite{mao2019variance} derive a bias-free, input-dependent baseline to reduce variance, and analytically show its benefits over state-dependent baselines. 
Recently, \cite{grathwohl2018backpropagation}; \cite{cheng2019trajectory} provide a standard explanation for the benefits of such approaches with baseline function.

However, the capacity of all the above methods is limited by their choice of baseline function \citep{liu2018action}.
In practice, it is troublesome to design a proper baseline function to reduce the variance of policy gradient estimate.
Our $\mathtt{VRMPO}$ avoids the selection of baseline function, and it uses the current trajectories to construct a novel, efficiently computable gradient to reduce variance and improve sample efficiency.

\section{Experiments}

Our experiments cover the following three different aspects:
\begin{compactenum}[\textbullet]
\item We provide a numerical analysis of MPO, and  compare the convergence rate of $\mathtt{MPO}$ with $\mathtt{REINFORCE}$ and $\mathtt{VPG}$ on the \emph{Short Corridor with Switched Actions} (SASC) domain \citep{sutton2018reinforcement}. 
\end{compactenum}
\begin{compactenum}[\textbullet]
\item We provider a better understand the effect of how the mirror map affects the performance of $\mathtt{VRMPO}$.
\end{compactenum}
\begin{compactenum}[\textbullet]
\item To demonstrate the stability and efficiency of $\mathtt{VRMPO}$ on the MuJoCo continuous control tasks,
%including {Walker2d-v2, HalfCheetah-v2, Reacher-v2, Hopper-v2, InvertedPendulum-v2, and InvertedDoublePendulum- v2}.
we provide a comprehensive comparison to state-of-the-art policy optimization algorithms. %While
%with $\mathtt{DDPG}$~\cite{lillicrap2015continuous}, $\mathtt{PPO}$~\cite{schulman2017proximal}, 
%$\mathtt{TRPO}$~\cite{schulman2015trust}, and
%$$\mathtt{TD3}$~\cite{fujimoto2018addressing}.
\end{compactenum}

\subsection{Numerical Analysis of MPO} 
SASC Domain (see Appendix B): The task is to estimate the optimal value function 
of state $\mathtt{s}_{1}$, $V(\mathtt{s}_{1})=G_{0}\approx -11.6$.
Let $\phi(s, \mathtt{right}) = [1, 0]^{\top}$ and $\phi(s, \mathtt{left}) = [0, 1]^{\top}$, $s\in\mathcal{S}$. 
Let $L_{\theta}(s,a)=\phi^{\top}(s,a)\theta$, $(s,a)\in\mathcal{S}\times\mathcal{A}$, where $\mathcal{A}=\{\mathtt{right},\mathtt{left}\}$.
$\pi_{\theta}(a|s)$ is the soft-max distribution defined as 
\[
\pi_{\theta}(a|s)=\dfrac{\exp\{L_{\theta}(s,a)\}}{\sum_{a^{'}\in\mathcal{A}}\exp\{L_{\theta}(s,a^{'})\}}.
\]
The initial parameter $\theta_{0}\sim\mathcal{U}[-0.5,0.5]$, where $\mathcal{U}$ is the uniform distribution.

Before we report the results, it is necessary to explain why we only compare $\mathtt{MPO}$ with $\mathtt{VPG}$ and $\mathtt{REINFORCE}$.
$\mathtt{VPG}$/$\mathtt{REINFORCE}$ is one of the most fundamental policy gradient methods in RL, and extensive modern policy-based algorithms are derived from $\mathtt{VPG}$/$\mathtt{REINFORCE}$.
Our $\mathtt{MPO}$ is a new policy gradient algorithm to learn the parameter.
Thus, it is natural to compare with $\mathtt{VPG}$ and $\mathtt{REINFORCE}$.
The result of Figure \ref{MPO-VPG-RE} shows that $\mathtt{MPO}$ converges faster significantly than both $\mathtt{REINFORCE}$ and $\mathtt{VPG}$.

\subsection{Effect of Mirror Map on VRMPO}
\label{sec: ex-map}

If $\psi(\cdot)$ is $\ell_{p}$-norm, then $\psi^{\star}(y)=(\sum_{i=1}^{n}|y_{i}|^{q})^{\frac{1}{q}}$ is the conjugate map of $\psi$, where $y=(y_1,y_2,\cdots,y_n)^{\top}$, $\dfrac{1}{p}+\dfrac{1}{q}=1$, and $p,q>1$.
According to \citet{beck2003mirror}, iteration (\ref{theta-mirror}) is equivalent to
\[
    \theta_{k+1}=\nabla\psi^{\star}(\nabla\psi(\theta_{k})+\alpha G_k),
\]
where $\nabla\psi_{j}(x)$ and $\nabla\psi_{j}^{\star}(y)$ are:
\[
 \nabla\psi_{j}(x)=\dfrac{\text{sign}(x_{j})|x_j|^{p-1}}{\|x\|_{p}^{p-2}},
  \nabla\psi_{j}^{\star}(y)=\dfrac{\text{sign}(y_{j})|y_j|^{q-1}}{\|y\|_{q}^{q-2}},
\]
 and $j$ is coordinate index of the vector $\nabla\psi$, $\nabla\psi^{\star}$.
 \begin{figure}[h]
    \centering
    \centering
  \includegraphics[scale=0.3]{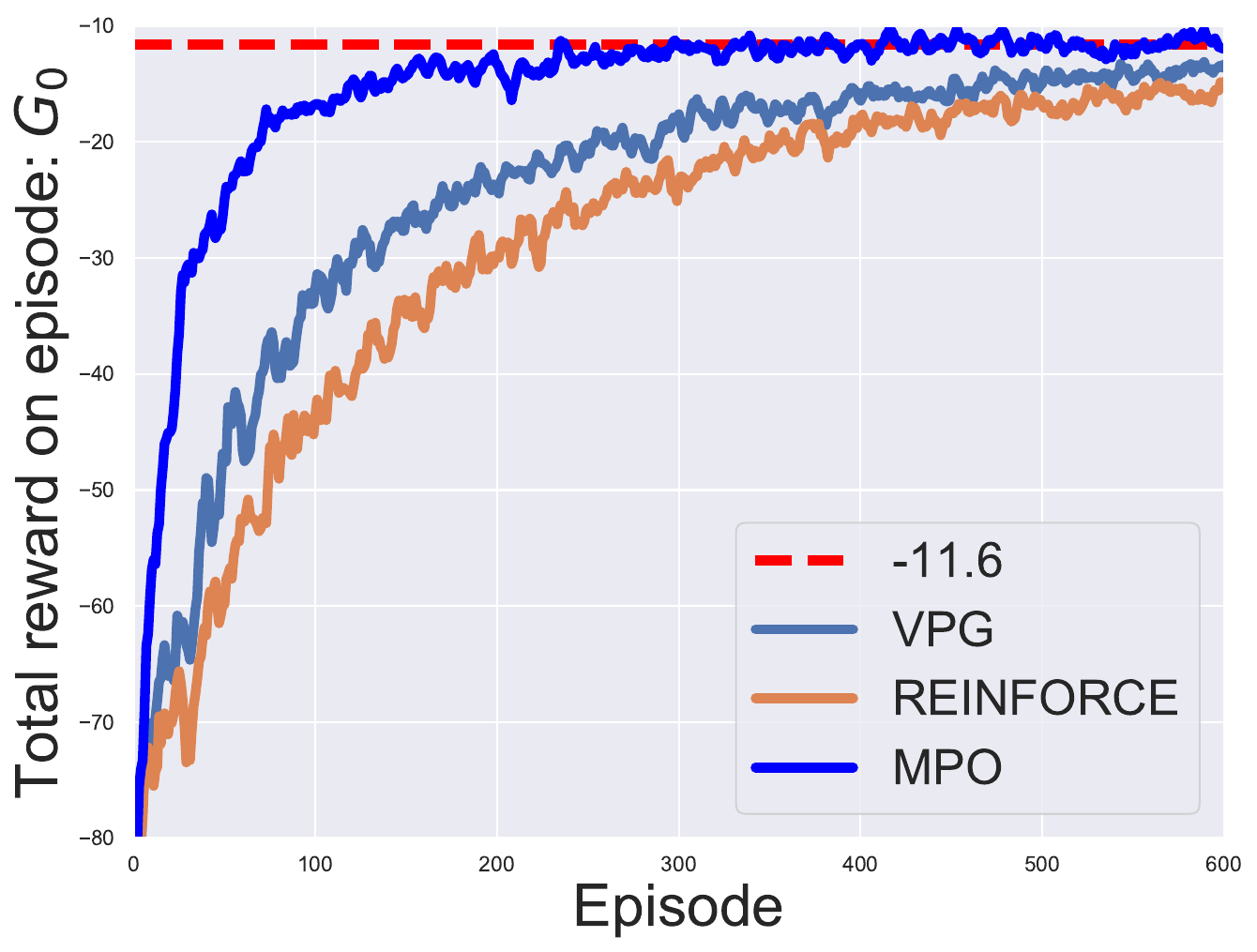}
    \caption
    {
        Convergence comparison between our $\mathtt{MPO}$ algorithm and $\mathtt{REINFORCE}$/$\mathtt{VPG}$ on the SASC domain.
    }
    \label{MPO-VPG-RE}
    \end{figure}

To compare fairly, we use the same random seed for each domain. The hyper-parameter $p$ runs in the set $[P]=\{1.1,1.2,\cdots,1.9,2,3,4,5\}$.
For the non-Euclidean distance case, we only show the results of $p=3,4,5$ in Figure \ref{comparing-3-domain}, 
and ``best" is a certain hyper-parameter $p\in[P]$ achieves the best performance among the set $[P]$.
We use a two-layer feedforward neural network of 200 and 100 hidden nodes, respectively, with rectified linear units (ReLU) activation function between each layer.
We run the discounter $ \gamma= 0.99$ and the step-size $\alpha$ is chosen by a grid search from the set $\{0.01,0.02,0.04,0.08, 0.1\}$.

The result of Figure \ref{comparing-3-domain} shows that the best method is produced by non-Euclidean distance ($p\ne 2$), not the Euclidean distance ($p= 2$). 
The traditional policy gradient methods such as $\mathtt{REINFORCE}$, $\mathtt{VPG}$, and $\mathtt{DPG}$ are all the algorithms update parameters by Euclidean distance.
This experiment gives us some light that one can create better algorithms with existing approaches via non-Euclidean distance.
Additionally, the result of Figure \ref{comparing-3-domain} shows our $\mathtt{VRMPO}$ converges faster than $\mathtt{REINFORCE}$, i.e., $\mathtt{VRMPO}$ 
needs less sampled trajectories to reach a convergent state, which supports the complexity analysis in Table \ref{comparing-complex}.
Although $\mathtt{SRVR}$-$\mathtt{PG}$ achieves the same sample complexity as our $\mathtt{VRMPO}$, result of Figure \ref{comparing-3-domain} shows $\mathtt{VRMPO}$ converges faster than $\mathtt{SRVR}$-$\mathtt{PG}$.

\begin{figure*}[t]
    \centering
    {\includegraphics[width=17.3cm,]{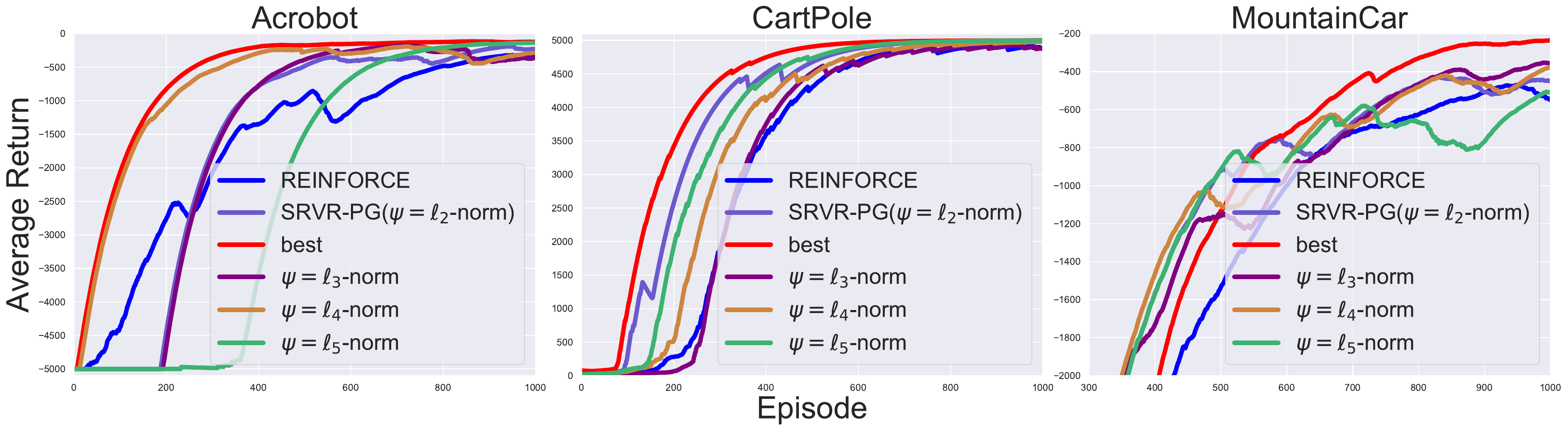}}
    \caption
    {
        Comparison of the empirical performance of $\mathtt{VRMPO}$ between different mirror maps and $\mathtt{REINFORCE}$.
    }
    \label{comparing-3-domain}
\end{figure*}

\subsection{Evaluate VRMPO on Continuous Control Tasks}
\label{on-line-vrmpo-section}

%For fairness, we follow the default setting of the MuJoCo simulator in each task. 
%These tasks have dynamics of different natures, so help evaluate the behavior of the proposed $\mathtt{VRMPO}$ in different scenarios.

It is noteworthy that the policy gradient (\ref{Eq:G_t}) of $\mathtt{VRMPO}$ is an off-line estimator likes $\mathtt{REINFOECE}$.
As pointed by \cite{sutton2018reinforcement}, $\mathtt{REINFOECE}$ converge asymptotically to a local minimum, but like all off-line methods, it is inconvenient for continuous control tasks, and it is limited in the application to some complex domains. 
This could also happen in $\mathtt{VRMPO}$.

Now, we introduce some practical tricks for on-line implementation of $\mathtt{VRMPO}$. We have provided the complete update rule of on-line $\mathtt{VRMPO}$ in Algorithm \ref{on-line VRMPO}.

\textbf{Details of Implementation}. Firstly, we extend Algorithm \ref{alg:SVR-MPD} to be an actor-critic structure, i.e., we introduce a critic structure to Algorithm \ref{alg:SVR-MPD}.
Concretely, for each step $t$, we construct a critic network $Q_{\omega}(s,a)$ with the parameter $\omega$,
sample $\{(s_i,a_i)\}_{i=1}^{N}$ from a data memory $\mathcal{D}$,
and learn the parameter $\omega$ via minimizing the critic loss as follows,
\begin{flalign}
L_{\omega}=\dfrac{1}{N}\sum_{i=1}^{N}(r_{i+1}+\gamma Q_{\omega_{k-1}}(s_i,a_i)-Q_{\omega}(s_i,a_i))^{2}.
\end{flalign}
For more details, please see $\mathtt{Line~17}$-$\mathtt{20}$ of Algorithm \ref{on-line VRMPO}.
Then, for each pair $(s,a)\sim\mathcal{D}$, we conduct the actor loss \[L_{\theta}(s,a)=-\log\pi_{\theta}(s,a)Q_{\omega_{k-1}}(s,a)\] to replace $J(\theta)$ 
to learn parameter $\theta$.
For more details, please see $\mathtt{Line~9}$-$\mathtt{16}$ of Algorithm \ref{on-line VRMPO} (Appendix E.1).

\textbf{Score Performance Comparison}. 

From the results of Figure \ref{fig-learning-curves} and Table \ref{table-max-data}, overall, $\mathtt{VRMPO}$ outperforms the baseline algorithms in both final performance and learning process. 
Our $\mathtt{VRMPO}$ also learns considerably faster with better performance than the popular $\mathtt{TD3}$ on Walker2d, HalfCheetah, Hopper, InvDoublePendulum (IDP), and Reacher domains.
On the InvDoublePendulum task, our $\mathtt{VRMPO}$ has only a small advantage over other algorithms. 
This is because the InvPendulum task is relatively easy.
The advantage of our $\mathtt{VRMPO}$ becomes more powerful when the task is more difficult. 
It is worth noticing that on the HalfCheetah domain, our $\mathtt{VRMPO}$ achieves a significant max-average score 16000+, which outperforms far more than the second-best score 11781.

\begin{figure*}[t!]
      \label{learning-curves}
    \centering
    \subfigure[Walker2d-v2]
    {\includegraphics[width=5cm,height=4.2cm]{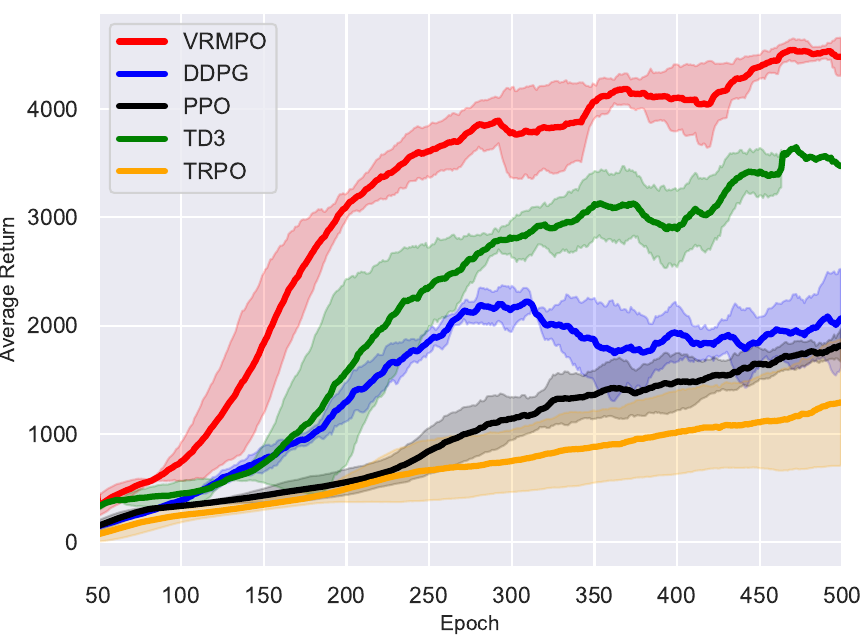}}
    \subfigure[HalfCheetah-v2]
    {\includegraphics[width=5cm,height=4.2cm]{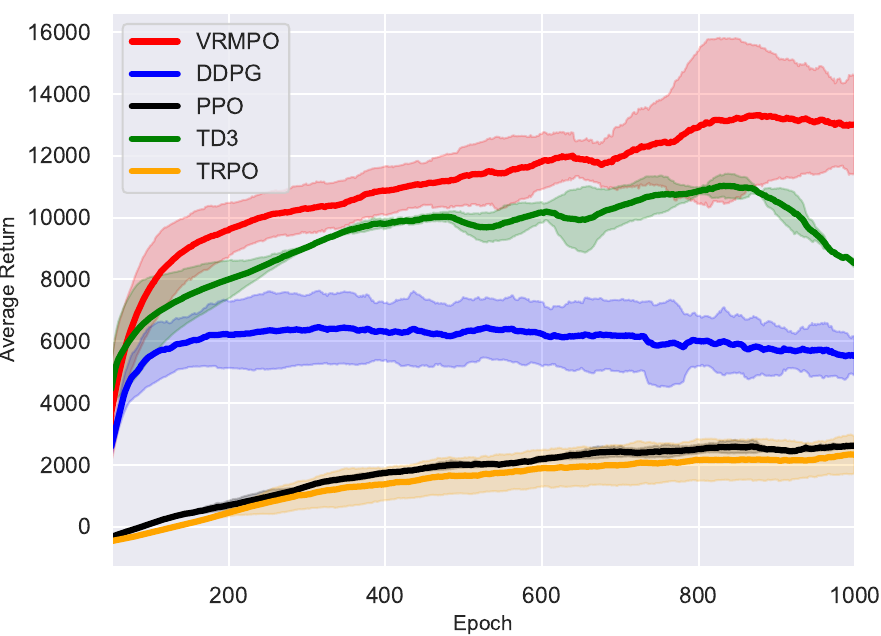}}
    \subfigure[Reacher-v2]
    {\includegraphics[width=5cm,height=4.2cm]{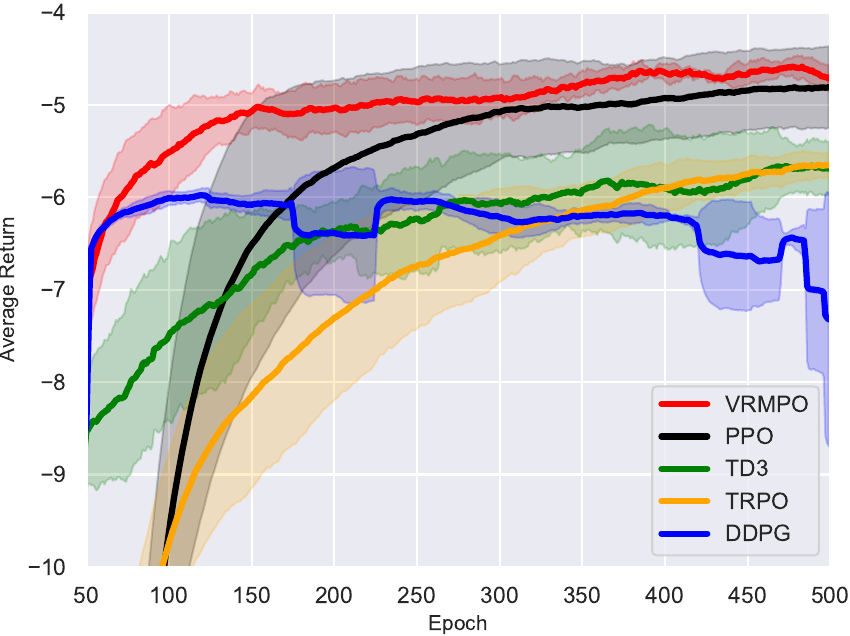}}
    \subfigure[Hopper-v2]
    {\includegraphics[width=5cm,height=4.2cm]{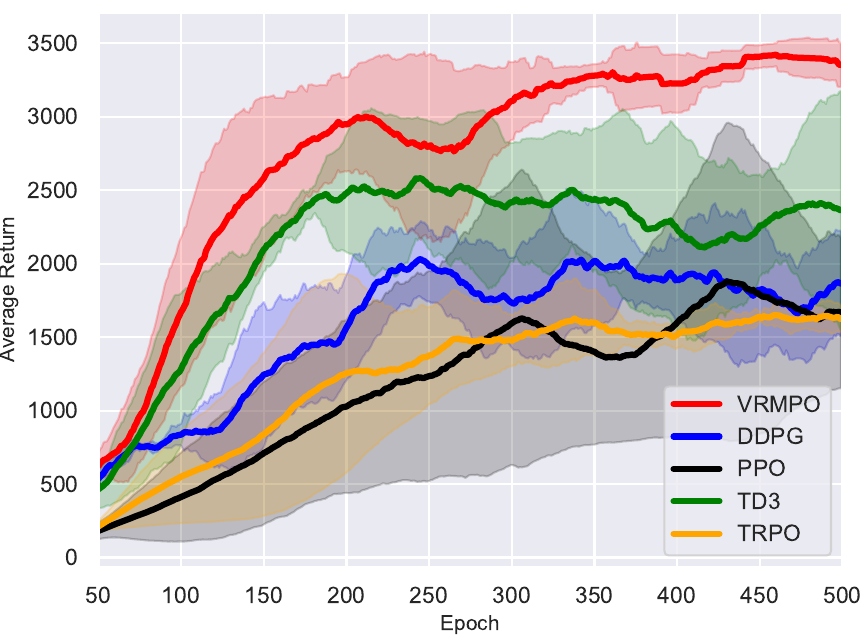}}
    \subfigure[InvDoublePendulum-v2]
    {\includegraphics[width=5cm,height=4.2cm]{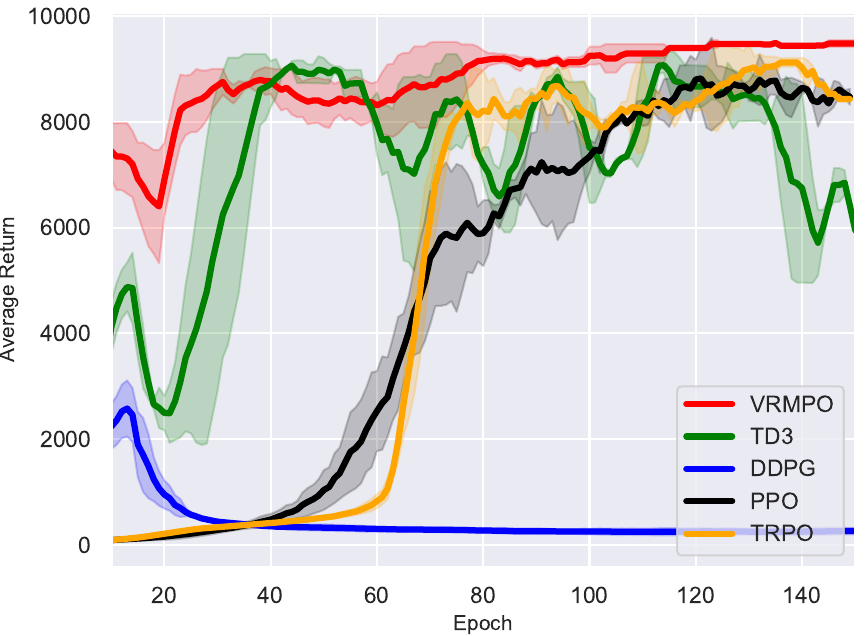}}
    \subfigure[InvPendulum-v2]
    {\includegraphics[width=5cm,height=4.2cm]{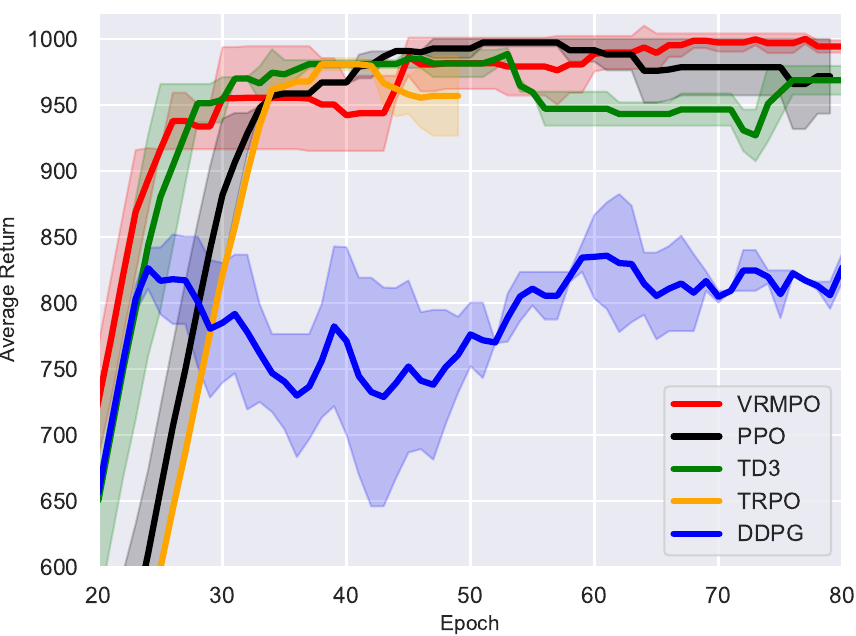}}
    \caption
    {
        Learning curves for continuous control tasks. The shaded region represents the standard deviation of the score over the best three trials. Curves are smoothed uniformly for visual clarity.
    }
    \label{fig-learning-curves}
\end{figure*}
\begin{table}[t!]
\centering
\label{results}
\begin{center}
\footnotesize{
\begin{tabular}{p{3cm}p{2cm}p{2cm}p{2cm}p{2cm}p{2cm}}
\toprule
\bf{Environment} & \bf{VRMPO} & \bf{TD3} & \bf{DDPG} & \bf{PPO} & \bf{TRPO}  \\
\midrule
Walker2d    & 5251.83 & 4887.85    &  \bf{5795.13}     & 3905.99        &3636.59  \\  
HalfCheetah             & \bf{16095.51} & 11781.07        & 8616.29    & 3542.60    & 3325.23  \\
Reacher         & -0.49 & -1.47        & -1.55         & \bf{-0.44}    & -0.66  \\
Hopper            & \bf{3751.43} & 3482.06         & 3558.69        & 3609.65        & 3578.06   \\
InvPendulum    & \bf{9359.82}     & 9248.27    & 6958.42         &9045.86    & 9151.56 \\
InvPendulum    & \bf{1000.00}         & \bf{1000.00}     & 907.81  & \bf{1000.00}    & \bf{1000.00}   \\
\bottomrule
\end{tabular}
}
\caption{Max-average return over final 50 epochs, where we run 5000 iterations for each epoch.
Maximum value for each task is bolded.}
\label{table-max-data}
\end{center}
\vskip -0.1in
\end{table}

\textbf{Stability}. 

The stability of an algorithm is also an important topic in RL.
Although $\mathtt{DDPG}$ exploits the off-policy samples, which promotes its efficiency in stable environments.
$\mathtt{DDPG}$ is unstable on the Reacher task, while our $\mathtt{VRMPO}$ learning faster significantly with lower variance.
$\mathtt{DDPG}$ fails to make any progress on InvDoublePendulum domain, which is corroborated by \cite{dai2017sbeed}.
Although $\mathtt{TD3}$ takes the minimum value between a pair of critics to limit overestimation, it learns severely fluctuating in the InvertedDoublePendulum environment.
In contrast, our $\mathtt{VRMPO}$ is consistently reliable and effective in different tasks.

\textbf{Variance Comparison}. 

As we can see from the results in Figure \ref{fig-learning-curves}, our $\mathtt{VRMPO}$ converges with a considerably low variance in the Hopper, InvDoublePendulum, and Reacher.
Although the asymptotic variance of $\mathtt{VRMPO}$ is slightly larger than other algorithms in HalfCheetah, the final performance of $\mathtt{VRMPO}$ outperforms all the baselines significantly.
The result of Figure \ref{fig-learning-curves} also implies conducting a proper gradient estimator not only reduces the variance of the score during the learning but speeds the convergence of training.

\section{Conclusion}

In this paper, we analyze the theoretical dilemma of applying SMD to policy optimization.
Then, we  propose a sample efficient algorithm $\mathtt{VRMPO}$, and prove the sample complexity of $\mathtt{VRMPO}$ achieves only $\mathcal{O}(\epsilon^{-3})$.
To our best knowledge, $\mathtt{VRMPO}$ matches the best sample complexity so far.
Finally, we conduct extensive experiments to show our algorithm outperforms state-of-the-art policy gradient methods.

\clearpage

\bibliographystyle{apalike}
\bibliography{reference}

\clearpage
\appendix
\section{Notations}

For convenience of reference, we list some key notations that have be used in this paper.

\begin{tabular}{r c p{15cm}}
        $J(\theta)$ &: &The expected return function.\\
        $\mathcal{J}(\theta)$ &: & $\mathcal{J}(\theta)$=$-J(\theta)$.\\
        $L$ &: &The Lipschiz constant, see (\ref{def:L}). \\
	$\sigma$ &: & The boundedness of the variance of the policy gradient estimator, see (\ref{def:sigma}). \\
	$\psi$&: & The mirror map.\\
	$\zeta$&: & The mirror map $\psi$ is a $\zeta$-strictly convex function, and we present it in Definition \ref{def1}.\\
	$\mathcal{G}(\cdot)$ &: &Bregman gradient, see (\ref{bregman-grdient-mapping}).\\
	$\Delta$&: & $\Delta=J(\theta^{\star})-J(\theta_1)$.
\end{tabular}

\section{Appendix A: Proof of Theorem \ref{cr-theo1}}
\label{sec: app-A}

\textbf{Theorem \ref{cr-theo1}} (Convergence Rate of Algorithm \ref{alg:Mirror-Policy-Algorithm})
\emph{
		Under Assumption \ref{ass:On-policy-derivative}, and the total trajectories are $\{\tau_{k}\}_{k=1}^{N}$.
    Consider the sequence $\{\theta_{k}\}_{k=1}^{N}$ generated by Algorithm \ref{alg:Mirror-Policy-Algorithm}, and the output $\tilde{\theta}_{N}=\theta_{n}$ follows the distribution of Eq.(\ref{mass-distriution}).
    Let $0<\alpha_{k}<\dfrac{\zeta}{L}$,
$\ell(g,u)=\langle g,u\rangle$. Let $\hat{g}_{k}=\dfrac{1}{k}\sum_{i=1}^{k}g_i$, where 
  ${g}_{i}=\sum_{t=0}^{H_{\tau_{i}}}
  \nabla_{\theta}\log\pi_{\theta}(a_{t}|s_{t})R(\tau_{i})|_{\theta=\theta_i}$.
   Then we have
    \begin{flalign}
    \nonumber
         \mathbb{E}[\|\mathcal{G}_{\alpha_n,\ell(-{g}_n,{\theta}_{n})}^{\psi}({\theta}_{n})\|^{2}]\leq\dfrac{\big(J(\theta^{\star})-J(\theta_1)\big){+\dfrac{\sigma^{2}}{\zeta}\sum_{k=1}^{N}{\dfrac{\alpha_k}{k}}}}{{\sum_{k=1}^{N}(\zeta\alpha_{k}-L\alpha^{2}_{k})}}.
    \end{flalign}
}

Let $f(\theta)$ be a $L$-smooth function defined on $\mathbb{R}^{n}$, i.e $\|\nabla f(\theta)-\nabla f(\theta^{'})\|\leq L\|\theta-\theta^{'}\|$. Then, according to \cite{jain2017non}, for $\forall \theta,\theta^{'}\in$$\mathbb{R}^{n}$, the following holds
 \begin{flalign}
 	\label{app:l-mooth-inequality}
 	\| f(\theta)- f(\theta^{'})-\langle\nabla f(\theta^{'}),\theta-\theta^{'}\rangle\|\leq\dfrac{L}{2}\|\theta-\theta^{'}\|^{2}.
 \end{flalign}

\begin{lemma}
	[\cite{ghadimi2016mini}, Lemma 1 and Proposition 1]
	\label{app-a-lemma}
	Let $\mathcal{X}$ be a closed convex set in $\mathbb{R}^d$, 
	$f(\theta)$ is a $L$-smooth function defined on $\mathcal{X}$,
	$h(\cdot):\mathcal{X} \rightarrow \mathbb{R}$ be a convex function, but possibly nonsmooth,  and $D_{\psi}: \mathcal{X}\times \mathcal{X} \rightarrow \mathbb{R}$ is Bregman divergence. Moreover,  we define
	\begin{flalign}
	\nonumber
	%\label{app-A-eq-1}
	&\theta^+ = \arg \min_{u\in \mathcal{X}} \{\langle g,u\rangle + \dfrac{1}{\alpha}D_{\psi}(u,\theta) + h(u) \} \\
	\label{app-A-eq-2}
	&P_{\mathcal{X}}(\theta,g,\alpha) = \dfrac{1}{\alpha}(\theta - \theta^+), 
	\end{flalign}
	where $g =\nabla f(\theta) \in \mathbb{R}^d$, $\theta \in \mathcal{X}$, and  $\alpha > 0$. Then, the following statement holds 
	\begin{flalign}
    \label{app-A-eq-3}
	\langle g,P_{\mathcal{X}} (\theta,g,\alpha)\rangle
		\ge \zeta \|P_{\mathcal{X}}(\theta,g,\alpha)\|^2 + \dfrac{1}{\alpha}[h(\theta^+) - h(\theta)],
	\end{flalign} 	
	where $\zeta$ is a positive constant determined by $\psi$ (i.e. $\psi$ is a 
	a continuously-differentiable and $\zeta$-strictly convex function) that satisfies
	$\langle \theta-\theta^{'},\nabla \psi(\theta)-\nabla \psi(\theta^{'})\rangle\ge\zeta\|\theta-\theta^{'}\|^{2}$.
	Moreover, for any $g_1,g_2 \in \mathbb{R}^d$, the following statement holds 
	\begin{flalign}
		\label{app-A-eq-4}
	\|P_{\mathcal{X}}(\theta,g_1,\alpha) - P_{\mathcal{X}}(\theta,g_2,\alpha)   \| &\leq \dfrac{1}{\zeta} \|g_1 - g_2\|. 
	\end{flalign}
\end{lemma}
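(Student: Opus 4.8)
The plan is to prove both assertions directly from the first-order optimality condition of the convex subproblem that defines $x^+$, using only the $\zeta$-strong convexity of $\psi$ and the convexity of $h$. The objective $u \mapsto \langle g,u\rangle + \frac{1}{\eta}D_{\psi}(u,x) + h(u)$ is convex, and its smooth part has gradient $g + \frac{1}{\eta}\bigl(\nabla\psi(u)-\nabla\psi(x)\bigr)$ because $\nabla_u D_{\psi}(u,x) = \nabla\psi(u)-\nabla\psi(x)$. Minimizing over the closed convex set $\mathcal{X}$ is equivalent to minimizing $\phi(u)+\delta_{\mathcal{X}}(u)$ over $\mathbb{R}^d$, so the optimality condition $0\in\partial(\phi+\delta_{\mathcal{X}})(x^+)$ yields a subgradient $s\in\partial h(x^+)$ with
\[
\Bigl\langle g + \tfrac{1}{\eta}\bigl(\nabla\psi(x^+)-\nabla\psi(x)\bigr) + s,\; u-x^+\Bigr\rangle \ge 0, \qquad \forall\, u\in\mathcal{X}.
\]
This variational inequality is the single engine behind both parts.

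For the inequality (\ref{app-A-eq-3}), I would set $u=x$ in the variational inequality and substitute $x-x^+=\eta P_{\mathcal{X}}(x,g,\eta)$. Writing $P=P_{\mathcal{X}}(x,g,\eta)$ and dividing through by $\eta>0$ gives $\langle g,P\rangle \ge \frac{1}{\eta}\langle\nabla\psi(x)-\nabla\psi(x^+),P\rangle - \langle s,P\rangle$. The first term is bounded below using $\zeta$-strong convexity: since $\langle\nabla\psi(x)-\nabla\psi(x^+),x-x^+\rangle\ge\zeta\|x-x^+\|^2$ and $x-x^+=\eta P$, the quantity $\frac{1}{\eta}\langle\nabla\psi(x)-\nabla\psi(x^+),P\rangle$ is at least $\zeta\|P\|^2$. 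The second term is handled by the subgradient inequality $h(x)\ge h(x^+)+\langle s,x-x^+\rangle$, which rearranges to $-\langle s,P\rangle=\frac{1}{\eta}\langle s,x^+-x\rangle\ge\frac{1}{\eta}\bigl(h(x^+)-h(x)\bigr)$. Adding the two bounds gives exactly (\ref{app-A-eq-3}).

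For the non-expansiveness bound (\ref{app-A-eq-4}), I would let $x_1^+,x_2^+$ be the two minimizers associated with $g_1,g_2$ and write the variational inequality for each, testing the first at $u=x_2^+$ and the second at $u=x_1^+$. Adding the two inequalities and setting $\Delta=x_2^+-x_1^+$ produces $\langle g_1-g_2,\Delta\rangle + \frac{1}{\eta}\langle\nabla\psi(x_1^+)-\nabla\psi(x_2^+),\Delta\rangle + \langle s_1-s_2,\Delta\rangle\ge0$. Monotonicity of the subdifferential of the convex $h$ gives $\langle s_1-s_2,\Delta\rangle\le0$, so that term can be dropped, while $\zeta$-strong convexity gives $\langle\nabla\psi(x_1^+)-\nabla\psi(x_2^+),\Delta\rangle\le-\zeta\|\Delta\|^2$. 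Combining yields $\langle g_1-g_2,\Delta\rangle\ge\frac{\zeta}{\eta}\|\Delta\|^2$, and Cauchy--Schwarz then gives $\|\Delta\|\le\frac{\eta}{\zeta}\|g_1-g_2\|$. Since $P_{\mathcal{X}}(x,g_1,\eta)-P_{\mathcal{X}}(x,g_2,\eta)=\frac{1}{\eta}\Delta$, dividing by $\eta$ delivers (\ref{app-A-eq-4}).

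The main obstacle is formulating the optimality condition correctly for the constrained, nonsmooth problem: one must invoke the subdifferential sum rule to split $\partial(\phi+\delta_{\mathcal{X}})(x^+)$ into the gradient of the smooth part, a subgradient of $h$, and a normal-cone element of $\mathcal{X}$, and then translate the normal-cone membership into the variational inequality valid for all $u\in\mathcal{X}$. The remaining care is bookkeeping of signs, in particular tracking that $\nabla_u D_{\psi}(u,x)=\nabla\psi(u)-\nabla\psi(x)$ and that the Bregman gradient carries the factor $\frac{1}{\eta}(x-x^+)$, so that the strong-convexity inner products come out with the correct orientation.
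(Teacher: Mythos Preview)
Your argument is correct: the variational inequality for the convex subproblem, combined with the $\zeta$-strong convexity of $\psi$ and the subgradient inequality (respectively, monotonicity of $\partial h$), yields both (\ref{app-A-eq-3}) and (\ref{app-A-eq-4}) exactly as you outline. The paper itself does not supply a proof of this lemma; it merely cites \cite{ghadimi2016mini} (Lemma~1 and Proposition~1), and your derivation is essentially the standard one given there.
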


\begin{remark}
Ghadimi, Lan, and Zhang \cite{ghadimi2016mini} call $P_{\mathcal{X}}(\theta,g,\alpha)$ (\ref{app-A-eq-2}) \emph{projected gradient}. It is noteworthy that if $h(\cdot)=0$, then the projected gradient $P_{\mathcal{X}}(x,g,\alpha)$ is reduced to Bregman Gradient (\ref{bregman-grdient-mapping}).
Concretely,
for Eq.(\ref{app-A-eq-2}), let $h(\cdot)\equiv 0$, then we have
\[P_{\mathcal{X}}(\theta,g,\alpha) \overset{(\ref{app-A-eq-2})}=  \dfrac{1}{\alpha}\big(\theta - \arg \min_{u\in \mathcal{X}} \{\langle g,u\rangle + \dfrac{1}{\alpha}D_{\psi}(u,\theta) \} \big)\overset{(\ref{bregman-grdient-mapping})}=\mathcal{G}_{\alpha,\langle g,\theta\rangle}^{\psi}(\theta).\]
\end{remark}

\begin{proof} \textbf{(Proof of Theorem \ref{cr-theo1})}

	Let $\mathcal{T}=\{\tau_{k}\}_{k=1}^{N}$,
	at each terminal end of a trajectory $\tau_{k}=\{s_{t},a_{t},r_{t+1}\}_{t=0}^{H_{\tau_{k}}}\in\mathcal{T}$, let 
	${g}_{k}=\sum_{t=0}^{H_{\tau_{k}}}
	 \nabla_{\theta}\log\pi_{\theta}(a_{t}|s_{t})R(\tau_k)|_{\theta=\theta_{k}},
	 \hat{g}_{k}=\dfrac{1}{k}\sum_{i=1}^{k}g_{i},$
	  according to Algorithm \ref{alg:Mirror-Policy-Algorithm}, at the terminal end of $k$-th episode, $k=1,2,\cdots,N$, the following holds,
	 \begin{flalign}
	 \nonumber
	 %\label{app-A-eq-5}
	 	\theta_{k+1}=\arg\min_{\theta}\{\langle -\hat{g}_k,\theta\rangle+\dfrac{1}{\alpha_k}D_{\psi}(\theta,\theta_k)\}=\mathcal{M}_{\alpha_k,\ell(-\hat{g}_{k},\theta)}^{\psi}(\theta_k).
	 \end{flalign}
	To simplify expression, let $\mathcal{J}(\theta)=-J(\theta)$.
	Then $\mathcal{J}(\theta)$ is also a $L$-smooth function, according to Eq.(\ref{app:l-mooth-inequality}), we have
	\begin{flalign}
	\nonumber
	%\label{app-A-eq-5}
			\mathcal{J}(\theta_{k+1})&\leq\mathcal{J}(\theta_{k})+\langle\nabla\mathcal{J}(\theta)\big|_{\theta=\theta_{k}} ,\theta_{k+1}-\theta_{k}\rangle+ \dfrac{L}{2}\|\theta_{k+1}-\theta_{k}\|^{2}\\
			\nonumber
			%\label{app-A-eq-6}
			&=\mathcal{J}(\theta_{k})-\alpha_{k}\langle\nabla\mathcal{J}(\theta_k) ,\mathcal{G}_{\alpha_{k},\ell(-\hat{g}_k,\theta_k)}^{\psi}(\theta_k)\rangle+ \dfrac{L\alpha_{k}^{2}}{2}\|\mathcal{G}_{\alpha_{k},\ell(-\hat{g}_k,\theta_k)}^{\psi}(\theta_k)\|^{2}\\
			%\label{app-A-eq-7}
			\nonumber
			&=\mathcal{J}(\theta_{k})-\alpha_{k}\langle\hat{g}_{k} ,\mathcal{G}_{\alpha_{k},\ell(-\hat{g}_k,\theta_k)}^{\psi}(\theta_k)\rangle+ \dfrac{L\alpha_{k}^{2}}{2}\|\mathcal{G}_{\alpha_{k},\ell(-\hat{g}_k,\theta_k)}^{\psi}(\theta_k)\|^{2}
			+\alpha_{k}\langle\epsilon_{k},\mathcal{G}_{\alpha_{k},\ell(-\hat{g}_k,\theta_k)}^{\psi}(\theta_k)\rangle,
	\end{flalign}
	where $\epsilon_{k}=-\hat{g}_{k}-(-\nabla{J}(\theta_k))=-\hat{g}_{k}-\nabla{\mathcal{J}}(\theta_k)$.

	%Form Eq.(\ref{app-A-eq-2}) and let $h(x)\equiv 0$ and $\alpha=\alpha$, then $P_{\mathcal{X}}(\theta,g,\alpha) = \mathcal{G}_{\alpha,\ell(g,\theta)}^{\psi}(\theta)$.

	Furthermore, by Eq.(\ref{app-A-eq-3}), let $\alpha=\alpha_k$ and $g=-\hat{g}_{k}$, then we have
	\begin{flalign}
		%\label{app-A-eq-8}
		\nonumber
			\mathcal{J}(\theta_{k+1})&\leq\mathcal{J}(\theta_{k})-\alpha_{k}\zeta\|\mathcal{G}_{\alpha_{k},\ell(-\hat{g}_k,\theta_k)}^{\psi}(\theta_k)\|^{2}+ \dfrac{L\alpha_{k}^{2}}{2}\|\mathcal{G}_{\alpha_{k},\ell(-\hat{g}_k,\theta_k)}^{\psi}(\theta_k)\|^{2}
			+\alpha_{k}\langle\epsilon_{k},\mathcal{G}_{\alpha_{k},\ell(-\hat{g}_k,\theta_k)}^{\psi}(\theta_k)\rangle\\
				%\label{app-A-eq-9}
			\nonumber
			&=\mathcal{J}(\theta_{k})-\alpha_{k}\zeta\|\mathcal{G}_{\alpha_{k},\ell(-\hat{g}_k,\theta_k)}^{\psi}(\theta_k)\|^{2}+ \dfrac{L\alpha_{k}^{2}}{2}\|\mathcal{G}_{\alpha_{k},\ell(-\hat{g}_k,\theta_k)}^{\psi}(\theta_k)\|^{2}+\alpha_{k}\langle\epsilon_{k},\mathcal{G}_{\alpha_{k},\ell(-\nabla{J}(\theta_k),\theta_k)}^{\psi}(\theta_k)\rangle\\
			\label{app-A-eq-10}
			&~~~~~~~~~~+\alpha_{k}\langle\epsilon_{k},\mathcal{G}_{\alpha_{k},\ell(-\hat{g}_k,\theta_k)}^{\psi}(\theta_k)-\mathcal{G}_{\alpha_{k},\ell(-\nabla{J}(\theta_k),\theta_k)}^{\psi}(\theta_k)\rangle.
				\end{flalign}
Rearranging Eq.(\ref{app-A-eq-10}), we have
	\begin{flalign}
		%\label{app-A-eq-11}
		\nonumber
		\mathcal{J}(\theta_{k+1})&\leq\mathcal{J}(\theta_{k})-\big(\zeta\alpha_k-\dfrac{L\alpha_{k}^{2}}{2}\big)\|\mathcal{G}_{\alpha_{k},\ell(-\hat{g}_k,\theta_k)}^{\psi}(\theta_k)\|^{2}+\alpha_{k}\langle\epsilon_{k},\mathcal{G}_{\alpha_{k},\ell(-\nabla{J}(\theta_k),\theta_k)}^{\psi}(\theta_k)\rangle\\
			%\label{app-A-eq-12}
		\nonumber
		&~~~~~~~~~~+\alpha_{k}\|\epsilon_{k}\|\|\mathcal{G}_{\alpha_{k},\ell(-\hat{g}_k,\theta_k)}^{\psi}(\theta_k)-\mathcal{G}_{\alpha_{k},\ell(-\nabla{J}(\theta_k),\theta_k)}^{\psi}(\theta_k)\|.
	\end{flalign}
	In Eq.(\ref{app-A-eq-4}), let $\theta=\theta_{k},g_1=-\hat{g}_k,g_2=-\nabla{J}(\theta_k),h(x)\equiv 0$, then
	the following statement holds
	 \begin{flalign}
	 \label{app-A-eq-13}
	 	\mathcal{J}(\theta_{k+1})
	 	\leq\mathcal{J}(\theta_{k})-\big(\zeta\alpha_k-\dfrac{L\alpha_{k}^{2}}{2}\big)\|\mathcal{G}_{\alpha_{k},\ell(-\hat{g}_k,\theta_k)}^{\psi}(\theta_k)\|^{2}+\alpha_{k}\langle\epsilon_{k},\mathcal{G}_{\alpha_{k},\ell(-\nabla{J}(\theta_k),\theta_k)}^{\psi}(\theta_k)\rangle+\dfrac{\alpha_k}{L}\|\epsilon_{k}\|^{2}.
	 \end{flalign}
	 Summing the above Eq.(\ref{app-A-eq-13}) from $k=1$ to $N$ and with the condition $\alpha_k\leq\dfrac{\zeta}{L}$, we have the following statement
	  \begin{flalign}
	  \nonumber
	 %\label{app-A-eq-14}
	 \sum_{k=1}^{N}(\zeta\alpha_{k}-L\alpha^{2}_{k})
	 \|\mathcal{G}_{\alpha_{k},\ell(-\hat{g}_k,\theta_k)}^{\psi}(\theta_k)\|^{2}
	 \leq&\sum_{k=1}^{N}(\zeta\alpha_{k}-\dfrac{L\alpha^{2}_{k}}{2})
	 \|\mathcal{G}_{\alpha_{k},\ell(-\hat{g}_k,\theta_k)}^{\psi}(\theta_k)\|^{2}\\
	 \nonumber
	 %\label{app-A-eq-16}
	 \leq&\sum_{k=1}^{N}[\alpha_{k}\langle\epsilon_k,\mathcal{G}_{\alpha_{k},\ell(-\nabla{J}(\theta_k),\theta_k)}^{\psi}(\theta_k)\rangle+\dfrac{\alpha_k}{\zeta}\|\epsilon_{k}\|^{2}]+\mathcal{J}(\theta_1)-\mathcal{J}(\theta_{k+1})\\
	 \label{app-A-eq-16}
	 \leq&\sum_{k=1}^{N}[\alpha_{k}\langle\epsilon_k,\mathcal{G}_{\alpha_{k},\ell(-\nabla{J}(\theta_k),\theta_k)}^{\psi}(\theta_k)\rangle+\dfrac{\alpha_k}{\zeta}\|\epsilon_{k}\|^{2}]+\mathcal{J}(\theta_1)-\mathcal{J}(\theta^{\star}).
	 \end{flalign}
	 Recall ${g}_{k}=\sum_{t=0}^{H_{\tau_{k}}}
	 \nabla_{\theta}\log\pi_{\theta}(a_{t}|s_{t})R(\tau_k)$,  by policy gradient theorem \cite{sutton2000policy}, we have
	 $
	 	\mathbb{E}[-g_k]=-\nabla J(\theta_{k})=\nabla \mathcal{J}(\theta_{k}).
	 $
	Let $\mathcal{F}_{k}$ be the $\sigma$-field generated by all random variables defined before round $k$, $\tilde{\epsilon}_{k}=g_{k}-\nabla {J}(\theta_{k})$
	then we have: for $k=1,\cdots,N$, 
	$
	\mathbb{E}[\langle\tilde{\epsilon}_{k},\mathcal{G}_{\alpha_{k},\ell(-\nabla{J}(\theta_k),\theta_k)}^{\psi}(\theta_k)\rangle|\mathcal{F}_{k-1}]=0.
	$
	Let $\delta_{s}=\sum_{t=1}^{s}\tilde{\epsilon}_{t}$, then, for $s=1,\cdots,k$, 
	\begin{flalign}
		\label{app-A-eq-19}
		\mathbb{E}[\langle \delta_{s},\tilde{\epsilon}_{s+1} \rangle|\delta_{s}]=0.
	\end{flalign}
	Furthermore, the following statement holds
	\begin{flalign}
		\label{app-A-eq-20}
	\mathbb{E}[\|\delta_{k}\|^{2}]=\mathbb{E}[\|\delta_{k-1}\|^{2}+2\langle\delta_{k-1},\tilde{\epsilon}_{k}\rangle+\|\tilde{\epsilon}_{t}\|^{2}]\overset{(\ref{app-A-eq-19})}=\mathbb{E}[\|\delta_{k-1}\|^{2}+\|\tilde{\epsilon}_{t}\|^{2}]=\cdots=\sum_{t=1}^{k}\mathbb{E}\|\tilde{\epsilon}_{t}\|^{2}.
	\end{flalign}
	By result (\ref{def:sigma}) and Eq.(\ref{app-A-eq-20}), we have
	$
		\mathbb{E}[\|\epsilon_{k}\|^{2}]=\dfrac{1}{k^{2}}\sum_{t=1}^{k}\mathbb{E}\|\tilde{\epsilon}_{t}\|^{2}\leq \dfrac{\sigma^{2}}{k}.
	$
	Combining this result with Eq.(\ref{app-A-eq-16}), and taking expectation w.r.t $\mathcal{F}_{N}$,
	we have
	\begin{flalign}
	\nonumber
	\sum_{k=1}^{N}(\zeta\alpha_{k}-L\alpha^{2}_{k})
	\mathbb{E}[\big\|\mathcal{G}_{\alpha_{k},\ell(-\hat{g}_k,\theta_k)}^{\psi}(\theta_k)\big\|^{2}]\leq
	\mathcal{J}(\theta_1)-\mathcal{J}({\theta^{\star}})+\dfrac{\sigma^{2}}{\zeta}\sum_{k=1}^{N}\dfrac{\alpha_k}{k}.
	\end{flalign}
	 Now,
	consider the output $\tilde{\theta}_{N}=\theta_{n}$ follows the distribution of Eq.(\ref{mass-distriution}), we have
	 \begin{flalign}
    \nonumber
         \mathbb{E}[\|\mathcal{G}_{\alpha_n,\ell(-{g}_n,{\theta}_{n})}^{\psi}({\theta}_{n})\|^{2}]\leq\dfrac{\big(J(\theta^{\star})-J(\theta_1)\big){+\dfrac{\sigma^{2}}{\zeta}\sum_{k=1}^{N}{\dfrac{\alpha_k}{k}}}}{{\sum_{k=1}^{N}(\zeta\alpha_{k}-L\alpha^{2}_{k})}}
         =:
         \dfrac{\Delta{+\dfrac{\sigma^{2}}{\zeta}\sum_{k=1}^{N}{\dfrac{\alpha_k}{k}}}}{{\sum_{k=1}^{N}(\zeta\alpha_{k}-L\alpha^{2}_{k})}}
         .
    \end{flalign}
\end{proof}	
\begin{remark}	
	Particularly, if the step-size $\alpha_{k}$ is fixed to a constant:$\zeta/2L$, 
	then 	\begin{flalign}
	\nonumber
	\mathbb{E}[\|\mathcal{G}_{\alpha_n,\ell(-\hat{g}_n,\theta_n)}^{\psi}(\theta_n)\|^{2}]\leq\dfrac{4L\big(J(\theta^{\star})-J(\theta_1)\big){+2{\sigma^{2}}\sum_{k=1}^{N}{\dfrac{1}{k}}}}{N\zeta^2}.
	\end{flalign}
    Recall the following estimation
	\[\sum_{k=1}^{N}\dfrac{1}{k}=\ln N +C+o(1),\] where $C$ is the Euler constant---a positive real number and $o(1)$ is infinitesimal.
	Thus the overall convergence rate reaches $\mathcal{O}(\dfrac{\ln N}{N})$ since
	\begin{flalign}
	\nonumber
	\mathbb{E}[\|\mathcal{G}_{\alpha_n,\ell(-\hat{g}_n,\theta_n)}^{\psi}(\theta_n)\|^{2}]&\leq\dfrac{4LD^{2}_{J}+2\sigma\sum_{k=1}^{N}\dfrac{1}{k}}{N\zeta}=\mathcal{O}(\dfrac{\ln N}{N})=\widetilde{\mathcal{O}}(\dfrac{1}{N}),
	\end{flalign}
	where we use $\widetilde{\mathcal{O}}$ to hide polylogarithmic factors in the input parameters, i.e., $\widetilde{\mathcal{O}}(f(x))=\mathcal{O}(f(x)\log(f(x))^{\mathcal{O}(1)})$.
\end{remark}

\clearpage

\section{Appendix B: Short Corridor with Switched Actions (SASC)} 
\label{SCSA}

We consider the small corridor grid world which contains three sates $\mathcal{S}=\{1,2,3\}$ .
The reward is  $-1$ per step.
In each of the three nonterminal states there are only two actions, $\mathtt{right}$ and $\mathtt{left}$. These actions have their usual consequences in the state  $1$ and state  $3$ ($\mathtt{left}$ causes no movement in the first state), but in the state $2$ they are reversed. so that $\mathtt{right}$ moves to the left and $\mathtt{left}$ moves to the right. 

\begin{figure}[h]
	\centering
	%\subfigure[]
	%{\includegraphics[width=8cm]{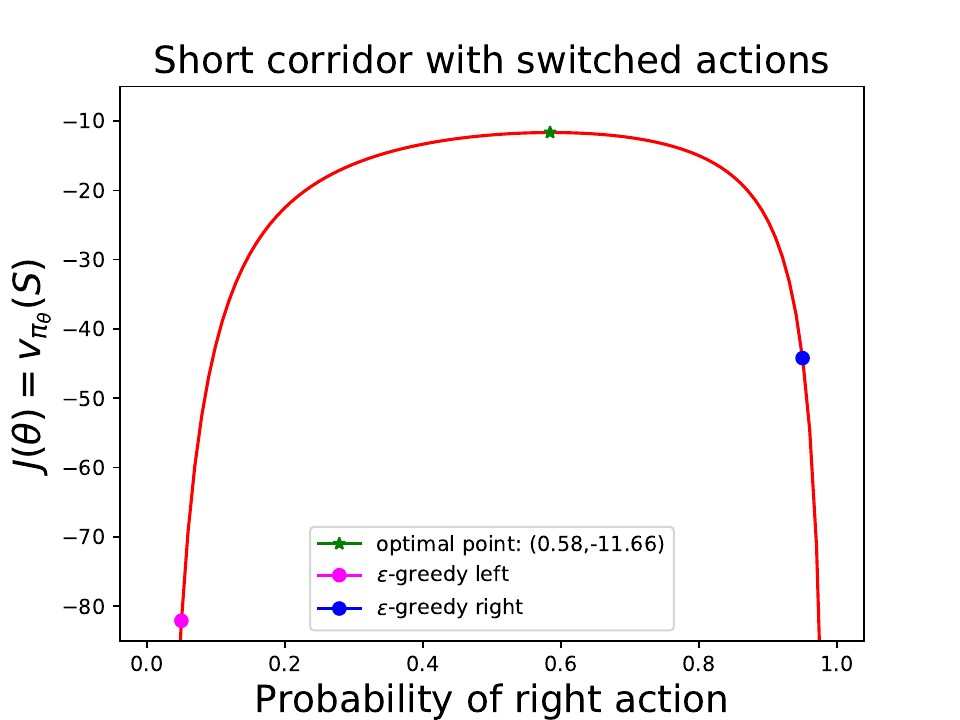}}
	\subfigure[]
	{\includegraphics[width=8cm]{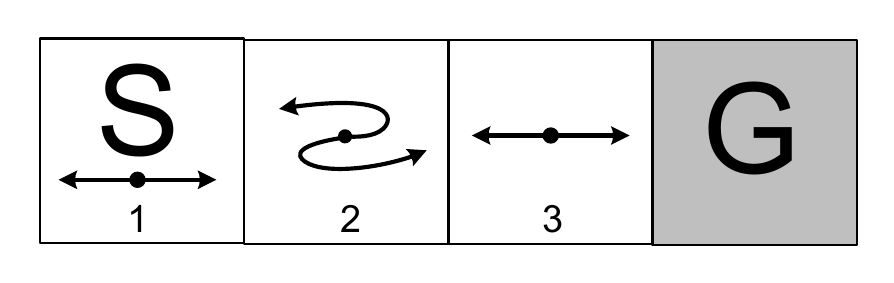}}
	\caption
	{Short corridor with switched actions (see Chapter 13 of (Sutton and Barto 2018)).
	}
\end{figure}
An action-value method with $\epsilon$-greedy action selection is forced to choose between just two policies: choosing $\mathtt{right}$ with high probability $1-\dfrac{\epsilon}{2}$ on all steps or choosing $\mathtt{left}$ with the same high probability on all time steps. 
If $\epsilon= 0.1$, then these two policies achieve a value (at the start state) of less than $-44$ and $-82$, respectively, as shown in the following graph. 
\begin{figure}[h]
	\centering
	\subfigure[]
	{\includegraphics[width=8cm]{short_corridor.pdf}}
	%\subfigure[]
	%{\includegraphics[width=8cm]{Short_corridor_1.pdf}}
	\caption
	{
		The expected return $J(\theta)$ as a function of probability of right action.
		We plot this figure according to an open source code \url{https://github.com/ShangtongZhang/reinforcement-learning-an-introduction/blob/master/chapter13/short_corridor.py}.
	}
\end{figure}

A method can do significantly better if it can learn a specific probability with which to select right. The best probability is about $0.58$, which achieves a value of about $-11.6$.

\clearpage
\section{Appendix C: Proof of Theorem \ref{theo:CR-VRMP} }
\label{app: proof-of-vrmpo}
\textbf{Theorem \ref{theo:CR-VRMP}} (Convergence Rate of VRMPO)
\emph
{
	Consider $\{\tilde{\theta}_{k}\}_{k=1}^{K}$ generated by Algorithm \ref{alg:SVR-MPD}.
Under Assumption \ref{ass:On-policy-derivative}, and let $\zeta>\dfrac{5}{32}$. For any positive scalar $\epsilon$, let batch size of the trajectories of the outer loop 
\begin{flalign}
\nonumber
N_{1}&={\left(\dfrac{1}{8L\zeta^2} +\dfrac{1}{2(\zeta-\dfrac{5}{32})}\left(1+\dfrac{1}{32\zeta^2}\right)\right)}{\dfrac{\sigma^2}{\epsilon^2}},\\
\nonumber
m-1&=N_{2}={\sqrt{\left(\dfrac{1}{8L\zeta^2} +\dfrac{1}{2(\zeta-\dfrac{5}{32})}\left(1+\dfrac{1}{32\zeta^2}\right)\right)}}{\dfrac{\sigma}{\epsilon}},
\end{flalign}
the outer loop times \[K=\dfrac{8L(\mathbb{E} [\mathcal{J}(\tilde{\theta}_{0})] - \mathcal{J}(\theta^{\star}))(1+\dfrac{1}{16\zeta^2})}{{\sqrt{\big(\dfrac{1}{8L\zeta^2} +\dfrac{1}{2(\zeta-\dfrac{5}{32})}\left(1+\dfrac{1}{32\zeta^2}\right)}}\left(\zeta-\dfrac{5}{32}\right)}\dfrac{\sigma}{\epsilon},\] 
and step size $\alpha=\dfrac{1}{4L}$.
Then, Algorithm \ref{alg:SVR-MPD} outputs $\tilde{\theta}_{K}$ satisties
\begin{flalign}
\label{cr-vrmpo-bound}
\mathbb{E}\left[\|\mathcal{G}^{\psi}_{\alpha,\inner{-\nabla J(\tilde{\theta}_{K})}{\theta}}(\tilde{\theta}_{K})\|\right]\leq\epsilon.
\end{flalign}
}
%\begin{lemma}[Lemma1 \cite{fang2018spider}]
%	\label{app-c-lemma}
%	Under Assumption \ref{ass:On-policy-derivative}, $G_{k,t}$ is generated according to (\ref{algo:VRMPG-1}), $\theta_{k,t}$ is generated according to (\ref{algo:VRMPG-2}), then for any $0\leq t\leq m$, the following holds
%	\begin{flalign}
%		\label{app-c-lemma-1}
%		\mathbb{E}[\|G_{k,t}-\nabla\mathcal{J}(\theta_{k,t})\|^{2}]\leq\dfrac{L^{2}}{N_2}
%		\mathbb{E}[\|\theta_{k,t}-\theta_{k,t-1}\|^{2}]+\mathbb{E}[\|G_{k,t-1}-\mathcal{J}(\theta_{k,t-1})\|^{2}].
%	\end{flalign}
%\end{lemma}

\begin{lemma}
\label{app-gap}
Let $\zeta>\dfrac{5}{32}$, the batch size of the trajectories of outer loop \[N_{1}=\dfrac{\Big(\dfrac{1}{8L\zeta^2} +\dfrac{1}{2(\zeta-\dfrac{5}{32})}\big(1+\dfrac{1}{32\zeta^2}\big)\Big)\sigma^2}{\epsilon^2},\] 
the iteration times of inner loop
\[m-1=N_{2}=\dfrac{\sqrt{\Big(\dfrac{1}{8L\zeta^2} +\dfrac{1}{2(\zeta-\dfrac{5}{32})}\big(1+\dfrac{1}{32\zeta^2}\big)\Big)}\sigma}{\epsilon},\] and step size $\alpha_{k}=\dfrac{1}{4L}$.
For each $k$ and $t$, $G_{k,0}$ and $\theta_{k,0}$ are generated by Algorithm \ref{alg:SVR-MPD}, under Assumption \ref{ass:On-policy-derivative}, then the following holds,
     \begin{flalign}
     \label{app-lemma-1}
     \mathbb{E} \|\nabla \mathcal{J}(\theta_{k,0}) - G_{k,0}\|^2\leq\left(\dfrac{1}{8L\zeta^2} +\dfrac{1}{2(\zeta-\dfrac{5}{32})}\left(1+\dfrac{1}{32\zeta^2}\right)\right)^{-1}{\epsilon^2}.
     \end{flalign}
\end{lemma}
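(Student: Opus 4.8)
The quantity $\mathbb{E}\|\nabla\mathcal{J}(\theta_{k,0}) - G_{k,0}\|^2$ is the variance of the outer-loop gradient estimate at the snapshot point. Recall that $\theta_{k,0} = \tilde\theta_{k-1}$ and $G_{k,0} = -\hat\nabla_{N_1}J(\theta_{k,0}) = -\frac{1}{N_1}\sum_{i=1}^{N_1} g(\tau_i|\theta_{k,0})$ is an empirical average of $N_1$ i.i.d.\ unbiased trajectory estimates of $\nabla\mathcal{J}(\theta_{k,0})$. Since each summand satisfies $\|g(\tau_i|\theta_{k,0}) + \nabla J(\theta_{k,0})\|^2 \le \sigma^2$ by Lemma \ref{properties-sigma}, the standard i.i.d.\ averaging bound gives $\mathbb{E}\|\nabla\mathcal{J}(\theta_{k,0}) - G_{k,0}\|^2 \le \sigma^2/N_1$. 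So the first step is simply to invoke Lemma \ref{properties-sigma} together with the fact that the trajectories in $\mathcal{T}_k$ are sampled fresh from $\pi_{\theta_{k,0}}$ and are therefore conditionally i.i.d.\ given $\theta_{k,0}$.

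The second step is to substitute the prescribed value of $N_1$. By hypothesis $N_1 = \Big(\frac{1}{8L\zeta^2} + \frac{1}{2(\zeta-\frac{5}{32})}(1+\frac{1}{32\zeta^2})\Big)\frac{\sigma^2}{\epsilon^2}$, so that $\sigma^2/N_1 = \Big(\frac{1}{8L\zeta^2} + \frac{1}{2(\zeta-\frac{5}{32})}(1+\frac{1}{32\zeta^2})\Big)^{-1}\epsilon^2$, which is exactly the claimed bound \eqref{app-lemma-1}. In other words, the lemma is essentially a restatement of the i.i.d.\ variance bound at the particular batch size chosen in the theorem, and the stated values of $N_2$, $m$ and $\alpha_k$ play no role in this particular estimate (they matter for the inner-loop analysis via Lemma \ref{app-c-lemma}, not here).

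The only real subtlety — and thus the main thing to be careful about — is the measure-theoretic bookkeeping: $\theta_{k,0}$ is itself random (it depends on all randomness up to the end of epoch $k-1$), so the $\sigma^2/N_1$ bound should first be established conditionally on the $\sigma$-field $\mathcal{F}_{k-1}$ generated by that past randomness, using that the new batch $\mathcal{T}_k$ is drawn independently of $\mathcal{F}_{k-1}$ from $\pi_{\theta_{k,0}}$, and then the tower property of conditional expectation removes the conditioning. I would also note that the per-sample bound from Lemma \ref{properties-sigma} is a deterministic almost-sure bound, so it holds uniformly over the (random) value of $\theta_{k,0}$, which is what makes the conditioning argument clean. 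Beyond that, the proof is a two-line computation.
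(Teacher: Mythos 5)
Your proposal is correct and follows exactly the paper's argument: expand $G_{k,0}$ as the empirical average over the $N_1$ i.i.d.\ trajectories, use unbiasedness to kill the cross terms, bound each summand by $\sigma^2$ via Lemma \ref{properties-sigma} to get $\sigma^2/N_1$, and substitute the prescribed $N_1$. Your additional remarks — that the bound should formally be taken conditionally on the past and then integrated by the tower property, and that $N_2$, $m$, $\alpha_k$ are irrelevant to this estimate — are correct and in fact slightly more careful than the paper's own two-line computation.
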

\begin{proof}
\begin{flalign}
  \mathbb{E} \|\nabla \mathcal{J}(\theta_{k,0}) - G_{k,0}\|^2&=\mathbb{E}\left\|\nabla J(\theta_{k,0})-\dfrac{1}{N_1}\sum_{i=1}^{N_1}g(\tau_{i}|\theta_{k,0})\right\|^{2}\\
  &=\dfrac{1}{N^{2}_{1}}\sum_{i=1}^{N_1}\mathbb{E}\|\nabla J(\theta_{k,0})-g(\tau_{i}|\theta_{k,0})\|^{2}\\
    \label{epsilo-1}
  &\overset{(\ref{def:sigma})}\leq\dfrac{\sigma^{2}}{N_{1}}
  =\left(\dfrac{1}{8L\zeta^2} +\dfrac{1}{2(\zeta-\dfrac{5}{32})}\left(1+\dfrac{1}{32\zeta^2}\right)\right)^{-1}{\epsilon^2}.
  \end{flalign}
\end{proof}

To simplify expression, in the following paragraph we use $\delta$ denote the long term of (\ref{epsilo-1}):
\begin{flalign}
\label{App-C:epsilo-1}
\left(\dfrac{1}{8L\zeta^2} +\dfrac{1}{2(\zeta-\dfrac{5}{32})}\big(1+\dfrac{1}{32\zeta^2}\big)\right)^{-1}{\epsilon^2}\overset{\text{def}}=\delta^2.
\end{flalign}

\subsection{(Proof of Theorem \ref{theo:CR-VRMP})}

By the definition of Bregman grdient mapping in Eq.(\ref{bregman-grdient-mapping}) and iteration (\ref{algo:VRMPG-2}), let $\alpha_{k}=\alpha$, we have
	\begin{flalign}
	\label{app-C-pre}
	\dfrac{1}{\alpha}(\theta_{k,t}-\theta_{k,t+1})\overset{(\ref{algo:VRMPG-2})}=\underbrace{\dfrac{1}{\alpha}\left(\theta_{k,t}-\arg\min_{u}\left\{\langle G_{k,t},u\rangle+\dfrac{1}{\alpha_k}D_{\psi}(u,\theta_{k,t})\right\}\right)}_{g_{k,t}}\overset{(\ref{bregman-grdient-mapping})}=\mathcal{G}^{\psi}_{\alpha,\inner{G_{k,t}}{u}}(\theta_{k,t}),
	\end{flalign}
	where we introduce $g_{k,t}$ to simplify notations. 
	
	{\color{blue}{\textbf{Step 1: Analyze the inner loop of Algorithm \ref{alg:SVR-MPD}}}}

	Now, we analyze the inner loop of Algorithm \ref{alg:SVR-MPD}. Let $\eta=\dfrac{\zeta-\dfrac{5}{32}}{4L}$, our goal is to prove
	\[
	\mathbb{E}  [\mathcal{J}(\tilde{\theta}_{k})] -  \mathbb{E}  [\mathcal{J}(\tilde{\theta}_{k-1})] \leq  -  \sum_{t=1}^{m-1}  \left(\eta \mathbb{E}[\|g_{k,t}\|^2] - \dfrac{\alpha}{2}\epsilon^2 \right),
	\]
In fact,
	\begin{flalign}
	\nonumber
	%\label{app-C-1}
	\mathcal{J}(\theta_{k,t+1}) &\overset{(\ref{app:l-mooth-inequality})}\leq \mathcal{J}(\theta_{k,t}) + \inner{\nabla \mathcal{J}(\theta_{k,t})}{\theta_{k,t+1}-\theta_{k,t}} + \dfrac{L}{2}\|\theta_{k,t+1}-\theta_{k,t}\|^2 \\
	\nonumber
	%\label{app-C-2}
	&\overset{(\ref{app-C-pre})}= \mathcal{J}(\theta_{k,t})- \alpha\inner{\nabla \mathcal{J}(\theta_{k,t})}{g_{k,t}} + \dfrac{L\alpha^2}{2}\|g_{k,t}\|^2 \\
	\nonumber
	%\label{app-C-3}
	&= \mathcal{J}(\theta_{k,t}) - \alpha\inner{\nabla \mathcal{J}(\theta_{k,t})-G_{k,t}}{g_{k,t}} -\alpha \inner{G_{k,t}}{g_{k,t}} + \dfrac{L\alpha^2}{2}\|g_{k,t}\|^2 \\
	\label{app-C-4}
	&\leq \mathcal{J}(\theta_{k,t}) + \dfrac{\alpha}{2} \|\nabla \mathcal{J}(\theta_{k,t})-G_{k,t}\|^2     -\alpha \inner{G_{k,t}}{g_{k,t}} + \left(\dfrac{L\alpha^2}{2} + \dfrac{\alpha}{2} \right) \|g_{k,t}\|^2 \\
\label{app-C-5}
	&\overset{(\ref{app-A-eq-3})}\leq \mathcal{J}(\theta_{k,t}) + \dfrac{\alpha}{2} \|\nabla \mathcal{J}(\theta_{k,t})-G_{k,t}\|^2-\zeta \alpha \|g_{k,t}\|^2 +  \left(\dfrac{L\alpha^2}{2} + \dfrac{\alpha}{2} \right) \|g_{k,t}\|^2,
	\end{flalign}
	Eq.(\ref{app-C-4}) holds due to the Cauchy-Schwarz inequality ${\displaystyle |\langle \mathbf {u} ,\mathbf {v} \rangle |\leq \|\mathbf {u} \|\|\mathbf {v} \|}
	\leq\dfrac{1}{2}(\|\mathbf {u} \|^{2}+\|\mathbf {v} \|^{2})$
	for any $\mathbf {u},\mathbf {v}\in\mathbb{R}^{n}$.
	Eq.(\ref{app-C-5}) holds if $h\equiv0$ by Eq.(\ref{app-A-eq-3}).

	Taking expectation on both sides of Eq.(\ref{app-C-5}), we have 
	\begin{flalign}
	\nonumber
	%\label{app-C-5}
	\mathbb{E}[ \mathcal{J}(\theta_{k,t+1})] &\leq \mathbb{E}[\mathcal{J}(\theta_{k,t})] + \dfrac{\alpha}{2}\mathbb{E}\Big[ \|\nabla \mathcal{J}(\theta_{k,t})-G_{k,t}\|^2\Big]- \left(\zeta \alpha-\dfrac{L\alpha^2}{2} - \dfrac{\alpha}{2} \right) \mathbb{E}\Big[\|g_{k,t}\|^2\Big]\\
		\nonumber
	&\leq \mathbb{E}[\mathcal{J}(\theta_{k,t})]  +\dfrac{\alpha }{2}\sum_{i=1}^{t} \dfrac{L^2}{N_2}\mathbb{E}\|\theta_{k,i+1} - \theta_{k,i}\|^2 \\
	\label{app-C-6}
	&~~~~~~~~+  \dfrac{\alpha}{2}\mathbb{E}\|G_{k-1,0} - \nabla \mathcal{J}({\theta}_{k-1,0})\|^2   - \left(\zeta \alpha - \dfrac{L\alpha^2}{2} - \dfrac{\alpha}{2} \right) \mathbb{E} \|g_{k,t}\|^2
	\end{flalign}
	Eq.(\ref{app-C-6}) holds due to since: 
	\begin{flalign}
	\label{app-c-lemma-2}
			\mathbb{E}[\|G_{k,t}-\nabla\mathcal{J}(\theta_{k,t})\|^{2}]\leq\sum_{i=1}^{t}\dfrac{L^{2}}{N_2}
				\mathbb{E}[\|\theta_{k,i+1}-\theta_{k,i}\|^{2}]+\mathbb{E}[\|G_{k-1,0}-\nabla\mathcal{J}(\tilde{\theta}_{k-1})\|^{2}].
	\end{flalign}

	By Lemma \ref{app-gap}, Eq.(\ref{app-C-6}) and Eq.(\ref{app-C-pre}), we have
	\begin{flalign}
	\nonumber
	%\label{app-C-7}
		\mathbb{E}[ \mathcal{J}(\theta_{k,t+1})] &\leq\mathbb{E}[\mathcal{J}(\theta_{k,t})]+\dfrac{\alpha^{3}L^{2}}{2N_2}\sum_{i=1}^{t}\mathbb{E}\Big[\|g_{k,i}\|^{2}\Big]+\dfrac{\alpha \delta^{2}}{2}- \left(\zeta \alpha - \dfrac{L\alpha^2}{2} - \dfrac{\alpha}{2} \right) \mathbb{E} \|g_{k,t}\|^2,
	\end{flalign}
where $\epsilon_1$ is defined in (\ref{App-C:epsilo-1}).
Recall the parameter $\tilde{\theta}_{k-1}={\theta}_{k-1,m}$ is generated by the last time of  $(k-1)$-th episode, now, we consider the following equation 
	\begin{flalign}
	\nonumber
	%\label{app-C-8}
	&\mathbb{E}  [\mathcal{J}(\theta_{k,t+1})] -  \mathbb{E}  [\mathcal{J}(\tilde{\theta}_{k-1})] 
	 \\
	 \nonumber
	 %\label{app-C-9}
	\leq&  \dfrac{\alpha^3 L^{2}}{2N_2} \sum_{j=1}^{t} \sum_{i=1}^{j}\mathbb{E}\|g_{k,i}\|^2 +\dfrac{\alpha}{2}\sum_{j=1}^{t} \delta^2 - \left(\zeta \alpha - \dfrac{L\alpha^2}{2} - \dfrac{\alpha}{2} \right) \sum_{j=1}^{t}\mathbb{E}\|g_{k,j}\|^2 
	 \\ 
	\nonumber
	%\label{app-C-10}
	\leq&  \dfrac{\alpha^3 L^{2}}{2N_2} \sum_{j=1}^{t} \sum_{i=1}^{t}\mathbb{E}\|g_{k,i}\|^2 +\dfrac{\alpha}{2}\sum_{j=1}^{t} \delta^2 - \left(\zeta \alpha - \dfrac{L\alpha^2}{2} - \dfrac{\alpha}{2} \right) \sum_{j=1}^{t}\mathbb{E}\|g_{k,j}\|^2 
	\\
	\nonumber
	%\label{app-C-11}
	=&  \dfrac{\alpha^3 L^{2}t}{2N_2}  \sum_{i=1}^{t}\mathbb{E}\|g_{k,i}\|^2 +\dfrac{\alpha}{2}\sum_{j=1}^{t} \delta^2 - \left(\zeta \alpha - \dfrac{L\alpha^2}{2} - \dfrac{\alpha}{2} \right) \sum_{j=1}^{t}\mathbb{E}\|g_{k,j}\|^2 \\
	%\nonumber
	\label{app-C-12}
\leq&  \dfrac{\alpha^3 L^{2}(m-1)}{2N_2}  \sum_{i=1}^{t}\mathbb{E}\|g_{k,i}\|^2 +\dfrac{\alpha}{2}\sum_{j=1}^{t} \delta^2 - \left(\zeta \alpha - \dfrac{L\alpha^2}{2} - \dfrac{\alpha}{2} \right) \sum_{j=1}^{t}\mathbb{E}\|g_{k,j}\|^2 
	\\
	\nonumber
	= &\dfrac{\alpha}{2}\sum_{j=1}^{t} \delta^2 - \underbrace{\left(\zeta \alpha - \dfrac{L\alpha^2}{2} - \dfrac{\alpha}{2} -\dfrac{\alpha^3 L^{2}(m-1)}{2N_2}\right)}_{\overset{\text{def}}=\eta=\dfrac{\zeta-\dfrac{5}{32}}{4L}} \sum_{j=1}^{t}\mathbb{E}\|g_{k,j}\|^2 \\
	\label{app-C-13}
	=&  -  \sum_{i=1}^{t}  \left(\eta \mathbb{E}[\|g_{k,i}\|^2] - \dfrac{\alpha}{2}\underbrace{\left(\left(\dfrac{1}{8L\zeta^2} +\dfrac{1}{2\left(\zeta-\dfrac{5}{32}\right)}\left(1+\dfrac{1}{32\zeta^2}\right)\right)\right)^{-1}{\epsilon^2}}_{=\delta^2} \right),
	\end{flalign}
	Eq.(\ref{app-C-12}) holds due to $t\leq m-1$. 
	
	If $t=m-1$, then the last Eq.(\ref{app-C-13}) implies
\begin{flalign}
	\label{app-C-17}
	\mathbb{E}  [\mathcal{J}(\tilde{\theta}_{k})] -  \mathbb{E}  [\mathcal{J}(\tilde{\theta}_{k-1})] \leq  -  \sum_{t=1}^{m-1}  \left(\eta \mathbb{E}[\|g_{k,t}\|^2] - \dfrac{\alpha}{2}\delta^2 \right).
\end{flalign}

{\color{blue}{\textbf{Step 2: Analyze the outer loop of Algorithm \ref{alg:SVR-MPD}}}}

We now consider the output of Algorithm \ref{alg:SVR-MPD},
\begin{flalign}
\nonumber
\mathbb{E} [\mathcal{J}(\tilde{\theta}_{K})] - \mathbb{E}[\mathcal{J}(\tilde{\theta}_{0})] 
%\label{app-C-14}
\nonumber
=& \Big(\mathbb{E} [\mathcal{J}(\tilde{\theta}_{1})] - \mathbb{E}[\mathcal{J}(\tilde{\theta}_{0})]\Big) + \Big(\mathbb{E} [\mathcal{J}(\tilde{\theta}_{2})] - \mathbb{E}[\mathcal{J}(\tilde{\theta}_{1})]\Big)\\
\nonumber
 &~~~~~~~~~~~~~~~~~~~~~+\cdots + \Big(\mathbb{E} [\mathcal{J}(\tilde{\theta}_{K})] - \mathbb{E}[\mathcal{J}(\tilde{\theta}_{K-1})]\Big)\\
%\label{app-C-15}
\nonumber
\overset{(\ref{app-C-17})}\leq& -   \sum_{t=0}^{m-1} \left(\eta \mathbb{E}\|g_{1,t}\|^2 - \dfrac{\alpha  }{2}\delta^2 \right) - \sum_{t=0}^{m-1}\left(\eta \mathbb{E}\|g_{2,t}\|^2 - \dfrac{\alpha}{2}\delta^2 \right)\\
\nonumber
&~~~~~~~~~~~~~~~~~~~~~~~~
- \cdots -   \sum_{t=0}^{m-1} \left(\eta \mathbb{E}\|g_{K,t}\|^2 - \dfrac{\alpha  }{2}\delta^2 \right)\\
%\label{app-C-16}
\nonumber
= &-   \sum_{k=1}^{K}\sum_{t=0}^{m-1}  \left(\eta \mathbb{E}\|g_{k,t}\|^2 - \dfrac{\alpha}{2}\delta^2 \right) \\
\nonumber
= &-   \sum_{k=1}^{K} \sum_{t=1}^{m-1}\Big(\eta \mathbb{E}\|g_{k,t}\|^2\Big) + \dfrac{ K \alpha}{2}\delta^2 ,
\end{flalign}
then we have 
\begin{flalign}
	\label{app-C-18}
	\sum_{k=1}^{K} \sum_{t=1}^{m-1}\Big(\eta \mathbb{E}\|g_{k,t}\|^2\Big) \leq \mathbb{E} [\mathcal{J}(\tilde{\theta}_{0})] - \mathcal{J}(\theta^{\star}) + \dfrac{ K(m-1) \alpha}{2}\delta^2.
\end{flalign}
Recall the notation in Eq.(\ref{app-C-pre})
\[{g_{k,t}}=\dfrac{1}{\alpha}(\theta_{k,t}-\arg\min_{u}\{\langle G_{k,t},u\rangle+\dfrac{1}{\alpha}D_{\psi}(u,\theta_{k,t})\})=\mathcal{G}^{\psi}_{\alpha,\inner{G_{k,t}}{u}}(\theta_{k,t}),\] 
and we introduce following $\tilde{g}(\theta_{k,t})$ to simplify notations,
\begin{flalign}
%\label{app-C-19}
\nonumber
	\tilde{g}(\theta_{k,t})&=\mathcal{G}^{\psi}_{\alpha,\inner{-\nabla J(\theta_{k,t})}{u}}(\theta_{k,t})\overset{\text{def}}=\tilde{g}_{k,t}\\
	\label{app-C-20}
	&=\dfrac{1}{\alpha}\left(\theta_{k,t}-\arg\min_{u}\left\{\langle {-\nabla J(\theta_{k,t})},u\rangle+\dfrac{1}{\alpha}D_{\psi}(u,\theta_{k,t})\right\}\right).
\end{flalign}
Then, the following holds
\begin{flalign}
	%\label{app-C-21}
	\nonumber
	\mathbb{E} \|\tilde{g}_{k,t}\|^2 \leq& \mathbb{E} \| g_{k,t} \|^2 + \mathbb{E} \|\tilde{g}_{k,t}- g_{k,t}\|^2 \\
	\label{app-C-22}
	\overset{(\ref{app-A-eq-4})}\leq& \mathbb{E} \| g_{k,t}  \|^2 + \dfrac{1}{\zeta^2} \mathbb{E} \|\nabla \mathcal{J}(\theta_{k,t}) - G_{k,t}\|^2 , 
\end{flalign}
Eq.(\ref{app-C-22}) holds due to the Eq.(\ref{app-A-eq-4}).

Let $\nu$ be the number that is selected randomly from $\{1,\cdots,(m-1)K\}$ which is the output of Algorihtm \ref{alg:SVR-MPD}, for the convenience of proof the there is no harm in hypothesis that $\nu=k\cdot (m-1)+t$ and we denote the output $\theta_{\nu}=\theta_{k,t}$.

Now, we analyze above Eq.(\ref{app-C-22}) and show it is bounded as following two parts (\ref{app-C-24}) and (\ref{app-C-28})
\begin{flalign}
	\label{app-C-23}
	%\nonumber
	\mathbb{E} \|g(\theta_{\nu})\|^2
= \dfrac{1}{(m-1)K}\sum_{k=1}^{K} \sum_{t=1}^{m-1}\mathbb{E} \|g_{k,t}\|^2 \overset{(\ref{app-C-18})}\leq \dfrac{\mathbb{E} [\mathcal{J}(\tilde{\theta}_{0})] - \mathcal{J}(\theta^{\star}) }{(m-1)K\eta }  +  \dfrac{\alpha }{2\eta}\delta^2,
\end{flalign}
which implies the following holds
\begin{flalign}
\label{app-C-24}
%\nonumber
\mathbb{E} \|g_{k,t}\|^2
\leq \dfrac{\mathbb{E} [\mathcal{J}(\tilde{\theta}_{0})] - \mathcal{J}(\theta^{\star}) }{(m-1)K\eta }  +  \dfrac{\alpha }{2\eta}\delta^2.
\end{flalign}
For another part of Eq.(\ref{app-C-22}), notice $\nu=k(m-1)+t$, then we have 
\begin{flalign}
\label{app-C-25}
\mathbb{E}  \|\nabla \mathcal{J}(\theta_{k,t})-G_{k,t}\|^2  
=&\mathbb{E}  \|\nabla \mathcal{J}(\theta_{\nu})-G_{\nu}\|^2 
 \\
 \nonumber
 \overset{(\ref{app-c-lemma-2})}\leq&  \mathbb{E}\Bigg[\dfrac{L^2}{N_2}\sum_{i=1}^{t } \mathbb{E}\|\theta_{k,i+1} - \theta_{k,i}\|^2 + \mathbb{E}[\|G_{k-1,0}-\nabla\mathcal{J}(\tilde{\theta}_{k-1})\|^{2}] \Bigg]\\
 \nonumber
\overset{(\ref{app-lemma-1})}\leq&  \mathbb{E}\Bigg[\dfrac{L^2}{N_2}\sum_{i=1}^{t } \mathbb{E}\|\theta_{k,i+1} - \theta_{k,i}\|^2 + \dfrac{\alpha}{2}\delta^2 \Bigg]
\\
\nonumber
\overset{(\ref{app-C-pre})}=& \mathbb{E}\Bigg[\dfrac{L^2\alpha^2}{ N_2}\sum_{i=1}^{t } \mathbb{E}\|g_{k,i}\|^2\Bigg] +\dfrac{\alpha}{2} \delta^2 
\\
\nonumber
\overset{t\leq m}\leq& \mathbb{E}\Bigg[\dfrac{L^2\alpha^2}{ N_2}\sum_{i=1}^{m-1} \mathbb{E}\|g_{k,i}\|^2\Bigg] + \dfrac{\alpha}{2}\delta^2 
\\
\label{app-C-26}
\leq&  \dfrac{L^2\alpha^2}{K N_2}\sum_{k=1}^{K} \sum_{t=1}^{m-1}\mathbb{E} \|g_{k,t}\|^2 + \dfrac{\alpha}{2}\delta^2 \\
\label{app-C-28}
\overset{(\ref{app-C-18})} \leq &\dfrac{ L^2 \alpha^2}{KN_2\eta} \left( \mathbb{E} [\mathcal{J}(\tilde{\theta}_{0})] - \mathcal{J}(\theta^{\star})  \right) +  \Big(\dfrac{ L^2 \alpha^3(m-1)}{2 N_2\eta}+\dfrac{\alpha}{2}\Big)\delta^{2}, 
\end{flalign}
Eq.(\ref{app-C-26}) holds due to the fact that the probability of selecting $\nu=k\cdot (m-1) +t$ is less than $\dfrac{1}{K}$.

Taking Eq(\ref{app-C-23}) and Eq.(\ref{app-C-26}) into Eq.(\ref{app-C-22}), then we have the following inequity
\begin{flalign}
\nonumber
\mathbb{E} \|\tilde{g}_{k,t}\|^2 &\leq\Big(\dfrac{1}{(m-1)K\eta}+\dfrac{ L^2 \alpha^2}{KN_2\eta\zeta^{2}}\Big) \left( \mathbb{E} [\mathcal{J}(\tilde{\theta}_{0})] - \mathcal{J}(\theta^{\star})  \right) +  \Big(\dfrac{ L^2 \alpha^3(m-1)}{2 N_2\eta\zeta^{2}}+\dfrac{\alpha}{2\zeta^{2}}+\dfrac{\alpha}{2\eta}\Big)\delta^{2}.
\end{flalign}
Recall $\alpha=\dfrac{1}{4L}$, \[N_{1}=\dfrac{\left(\dfrac{1}{8L\zeta^2} +\dfrac{1}{2(\zeta-\dfrac{5}{32})}\left(1+\dfrac{1}{32\zeta^2}\right)\right)\sigma^2}{\epsilon^2},\] 
\[N_{2}=m-1=\dfrac{\sqrt{\left(\dfrac{1}{8L\zeta^2} +\dfrac{1}{2(\eta-\dfrac{5}{32})}\left(1+\dfrac{1}{32\zeta^2}\right)\right)}\sigma}{\epsilon},\] then we have
\begin{flalign}
\mathbb{E}\|\mathcal{G}_{\alpha,\langle-\nabla J (\tilde{\theta}_{K}),{\theta}\rangle}\|^2=\mathbb{E} \|\tilde{g}_{k,t}\|^2 \leq\dfrac{4L}{K(m-1)(\zeta-\dfrac{5}{32})}\big(1+\dfrac{1}{16\zeta^2}\big)(\mathbb{E} [\mathcal{J}(\tilde{\theta}_{0})] - \mathcal{J}(\theta^{\star}))+\dfrac{1}{2}\epsilon^2.
\end{flalign}
Furthermore, since \[K=\dfrac{8L\big(1+\dfrac{1}{16\zeta^2}\big)}{(m-1)(\zeta-\dfrac{5}{32})}\cdot\dfrac{\mathbb{E} [\mathcal{J}(\tilde{\theta}_{0})] - \mathcal{J}(\theta^{\star})}{\epsilon^2}=\dfrac{8L(\mathbb{E} [\mathcal{J}(\tilde{\theta}_{0})] - \mathcal{J}(\theta^{\star}))(1+\dfrac{1}{16\zeta^2})}{{\sqrt{\big(\dfrac{1}{8L\zeta^2} +\dfrac{1}{2(\zeta-\dfrac{5}{32})}\big(1+\dfrac{1}{32\zeta^2}\big)\big)}}\big(\zeta-\dfrac{5}{32}\big)}\dfrac{\sigma}{\epsilon},\] we have
\begin{flalign}
\mathbb{E}[\|\mathcal{G}^{\psi}_{\alpha,\inner{-\nabla J(\tilde{\theta}_{K})}{\theta}}(\tilde{\theta}_{K})\|]\leq\epsilon.
\end{flalign}

\section{On-line VRMPO}

\begin{algorithm}[t]
    \caption{On-line VRMPO}
    \label{on-line VRMPO}
    \begin{algorithmic}[1]
        \STATE {\bfseries  Initialize:} Policy $\pi_{\theta}(a|s)$ with parameter $\tilde{\theta}_{0}$, critic network $Q_{\omega^{j}}$ with parameter $\tilde{\omega}^{j}_{0}$, $j=1,2$, mirror map $\psi$,step-size $\alpha> 0$, epoch size $K$,$m$.
        \STATE {\bfseries  Initialize:} Parameter $\tilde{\omega}^{j}_{0},j=1,2$ ,$0<\kappa<1$ .      
        \FOR{$k=1$ {\bfseries to} $K$}
        \FOR{each domain step}
        \STATE $a_{t}\sim\pi_{\tilde{\theta}_{k-1}}(\cdot|s_{t})$
        \STATE $s_{t+1}\sim P(\cdot|s_{t},a_{t})$
        \STATE $\mathcal{D}=\mathcal{D}\cup\{(s_t,a_t,r_{t},s_{t+1})\}$
        \ENDFOR
        \STATE sample mini-batch $\{(s_i,a_i)\}_{i=1}^{N}\sim\mathcal{D}$
        \STATE $\theta_{k,0}=\tilde{\theta}_{k-1}, \omega_{k,0}=\tilde{\omega}^{j}_{k-1},j=1,2$
        \STATE $L_{\theta}(s,a)=-\log\pi_{\theta}(s,a)\underbrace{(\min_{j=1,2}Q_{\omega^{j}_{k-1}}(s,a))}_{\text{Double Q-Learning~\cite{van2016deep}}}$
        \STATE $\theta_{k,1}=\theta_{k,0}-\alpha_k G_{k,0}$,~~~where $G_{k,0}=\dfrac{1}{N}\sum_{i=1}^{N}\nabla_{\theta}L_{\theta}(s_i,a_i)\Big|_{\theta=\theta_{k,0}}$
        \FOR{$t=1$ {\bfseries to} $m-1$}
        \STATE  /* Update Actor $(m-1)$ Epochs  */
         \STATE ~~~~~~~~~~~~~~~~~~~sample mini-batch $\{(s_i,a_i)\}_{i=1}^{N}\sim\mathcal{D}$       
        \begin{flalign}
        &\delta_{k,t}=\dfrac{1}{N}\sum_{i=1}^{N}\nabla_{\theta}L_{\theta}(s_i,a_i)\Big|_{\theta=\theta_{k,t}}-\dfrac{1}{N}\sum_{i=1}^{N}\nabla_{\theta}L_{\theta}(s_i,a_i)\Big|_{\theta=\theta_{k,t-1}}\\
        &G_{k,t}=\delta_{k,t}+G_{k,t-1}\\
        \label{app-C-theta-iteration}
        &\theta_{k,t+1}=\arg\min_{u}\{\langle G_{k,t},u\rangle+\dfrac{1}{\alpha_k}D_{\psi}(u,\theta_{k,t})\}
        \end{flalign}
        \ENDFOR
        \FOR{$t=1$ {\bfseries to} $m-1$}
        \STATE /* Update Critic $(m-1)$ Epochs */
        \STATE ~~~~~~~~~~~~~~~~~~~sample mini-batch $\{(s_i,a_i)\}_{i=1}^{N}\sim\mathcal{D}$ 
        \begin{flalign}
        \label{Loss-of-critic}
        &L_{\omega^{j}_{k-1,t-1}}(\omega)=\dfrac{1}{N}\sum_{i=1}^{N}(Q_{\omega^{j}_{k-1,t-1}}(s_i,a_i)-Q_{\omega}(s_i,a_i))^{2},j=1,2\\
        \label{learning-para-of-critic}
        &\omega^{j}_{k,t}=\arg\min_{\omega}L_{\omega^{j}_{k-1,t-1}}(\omega),j=1,2
        \end{flalign}
        \ENDFOR
        \STATE $\tilde{\theta}_{k}\overset{\text{def}}=\theta_{k,m-1}$
        \STATE $\tilde{\omega}^{j}_{k}\overset{\text{def}}=\omega^{j}_{k,m-1},j=1,2$
        \STATE /* Soft Update */
        \STATE $\tilde{\theta}_{k}\leftarrow\kappa\tilde{\theta}_{k-1}+(1-\kappa)\tilde{\theta}_{k}$
        \STATE $\tilde{\omega}^{j}_{k}\leftarrow\kappa\tilde{\omega}^{j}_{k-1}+(1-\kappa)\tilde{\omega}^{j}_{k},j=1,2$
        \ENDFOR
       \STATE {\bfseries Output:} $\tilde{\theta}_{K},\{\tilde{\omega}^{j}_{K}\}_{j=1,2}$.
    \end{algorithmic}
\end{algorithm}

\clearpage
\section{Experiments}
\label{app-c-para}

\subsection{Additional experiments}
We compare the VRMPO with baseline algorithms on test scores and max-return. All the results are shown in Figure \ref{figure-6}-\ref{figure-7}.
\begin{figure*}[h]
      \label{learning-curves}
    \centering
    \subfigure[Walker2d-v2]
    {\includegraphics[width=5cm,height=4cm]{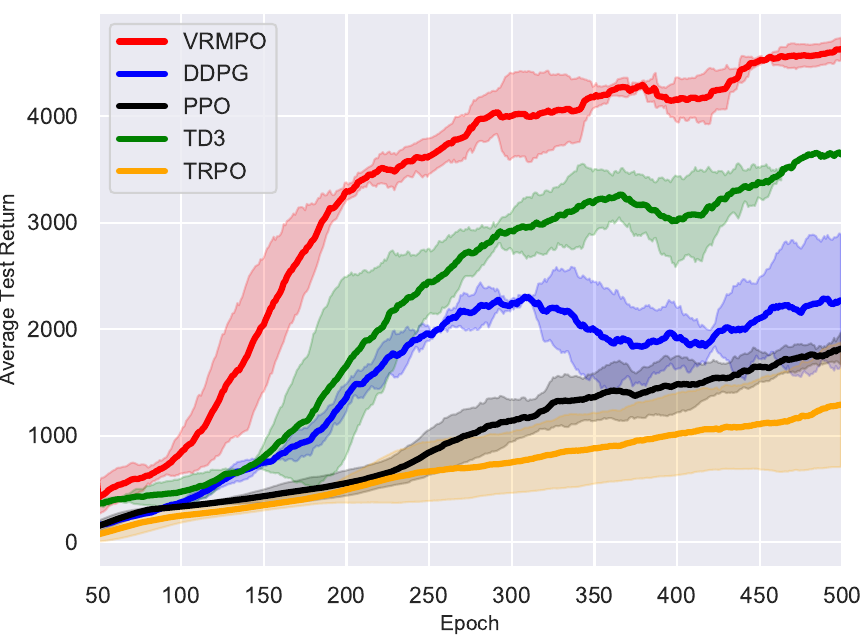}}
    \subfigure[HalfCheetah-v2]
    {\includegraphics[width=5cm,height=4cm]{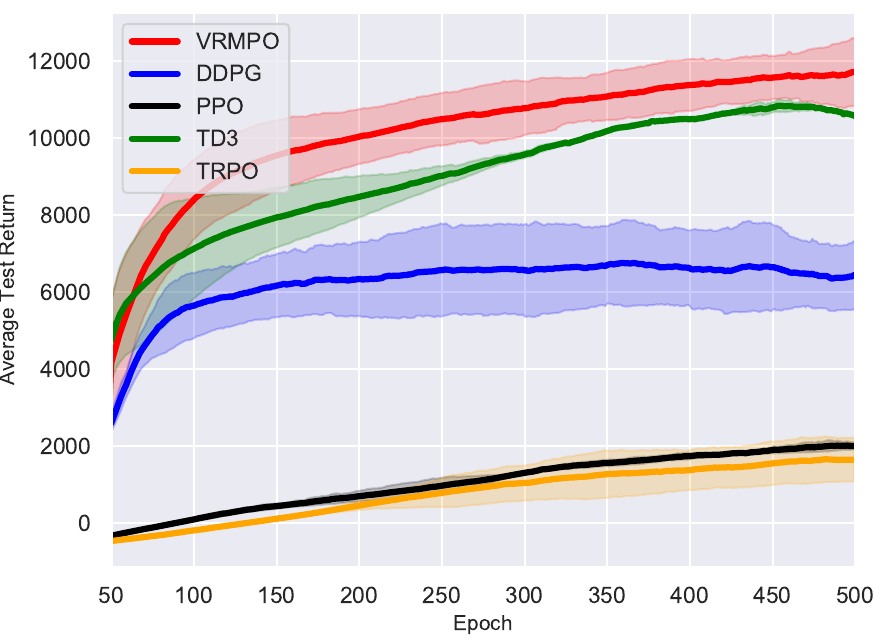}}
    \subfigure[Reacher-v2]
    {\includegraphics[width=5cm,height=4cm]{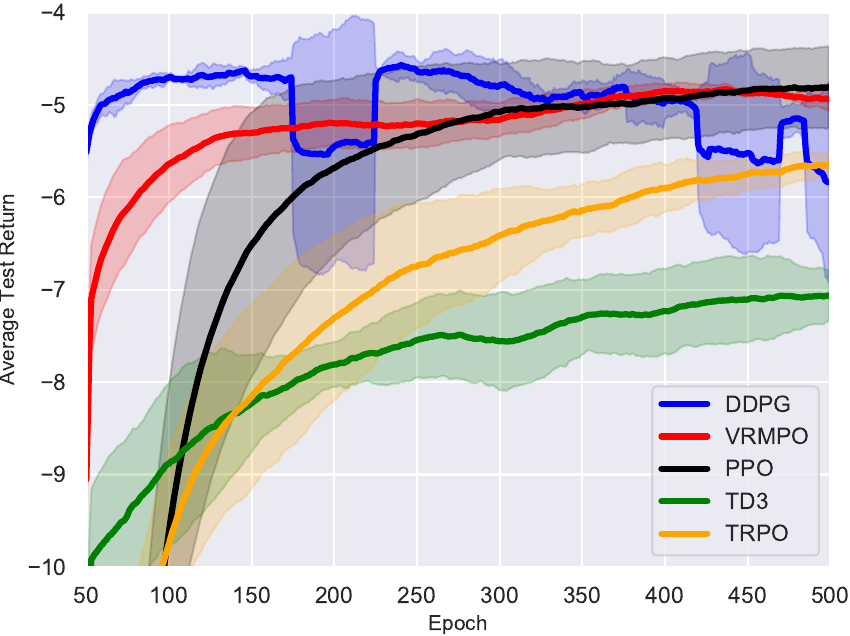}}
    \subfigure[Hopper-v2]
    {\includegraphics[width=5cm,height=4cm]{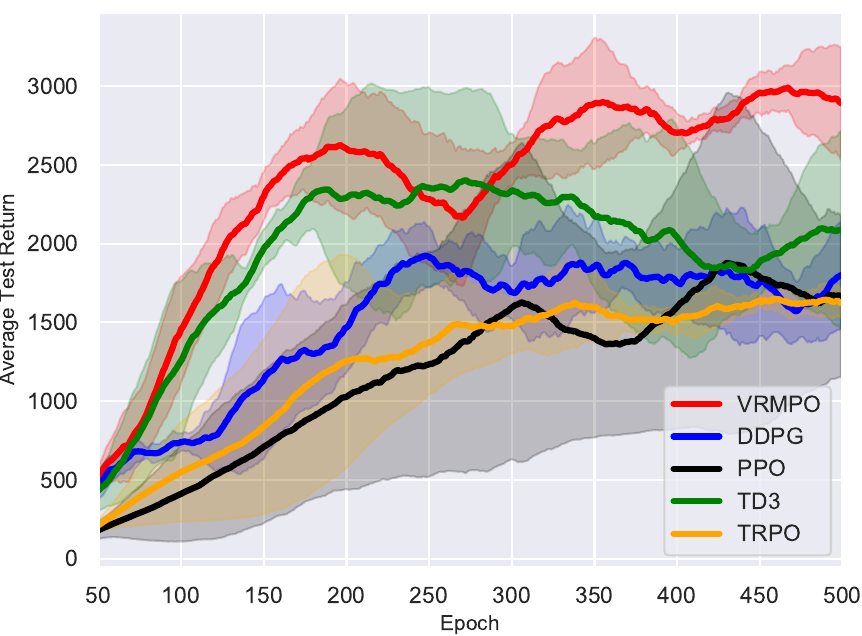}}
    \subfigure[InvDoublePendulum-v2]
    {\includegraphics[width=5cm,height=4cm]{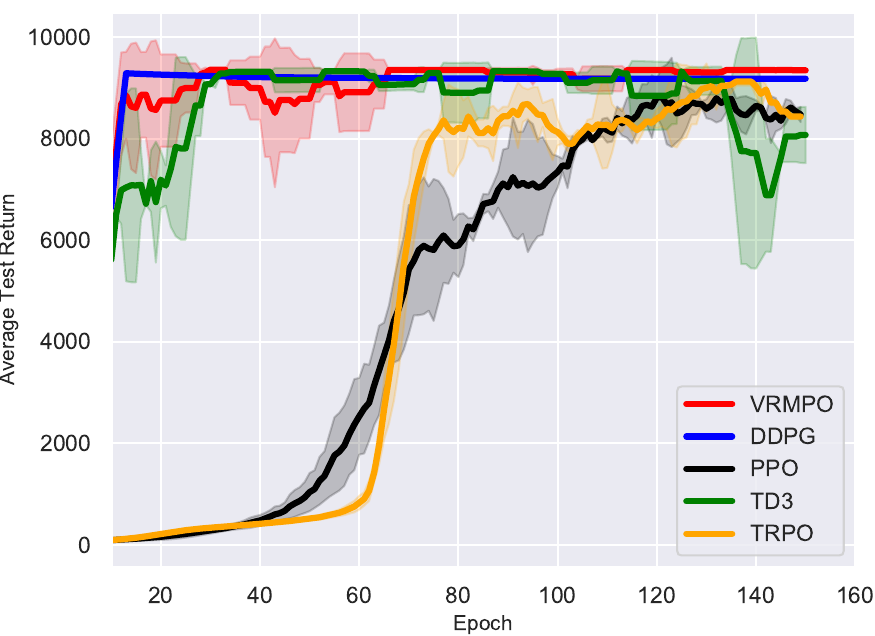}}
    \subfigure[InvPendulum-v2]
    {\includegraphics[width=5cm,height=4cm]{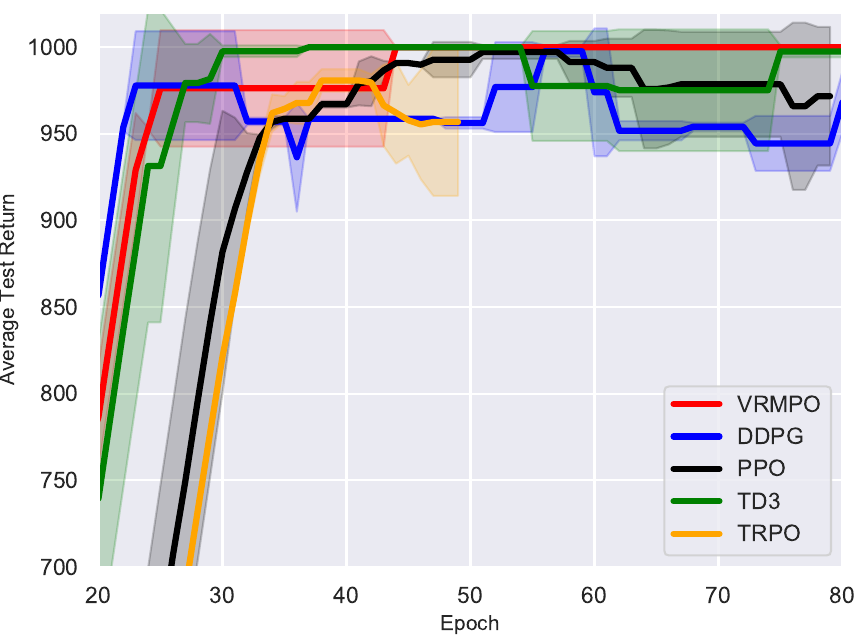}}
    \caption
    {
        Learning curves of test score over the epoch, where we run 5000 iterations for each epoch. The shaded region represents the standard deviation of the test score over the best 3 trials. Curves are smoothed uniformly for visual clarity.
    }
    \label{figure-6}
\end{figure*}
\begin{figure*}[h]
      \label{learning-curves}
    \centering
    \subfigure[Walker2d-v2]
    {\includegraphics[width=5cm,height=4cm]{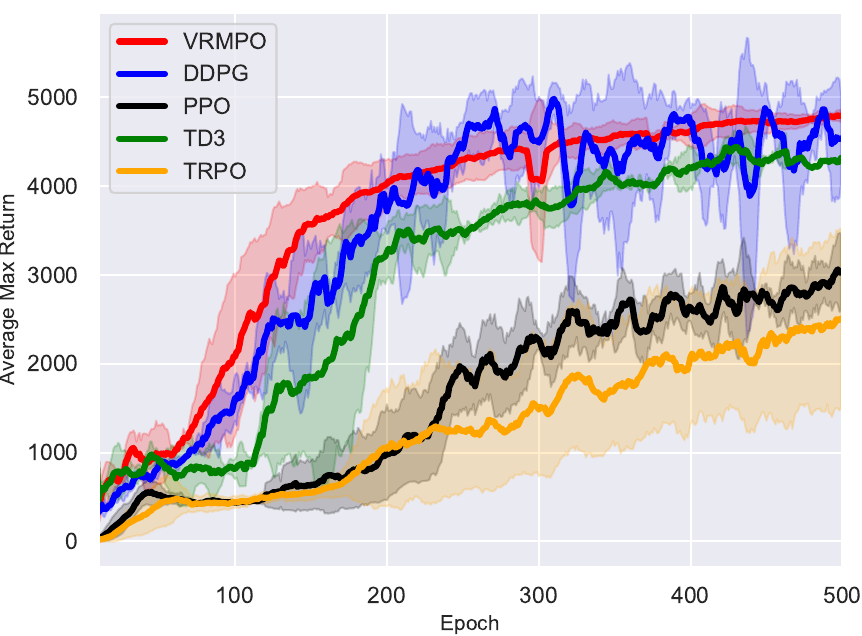}}
    \subfigure[HalfCheetah-v2]
    {\includegraphics[width=5cm,height=4cm]{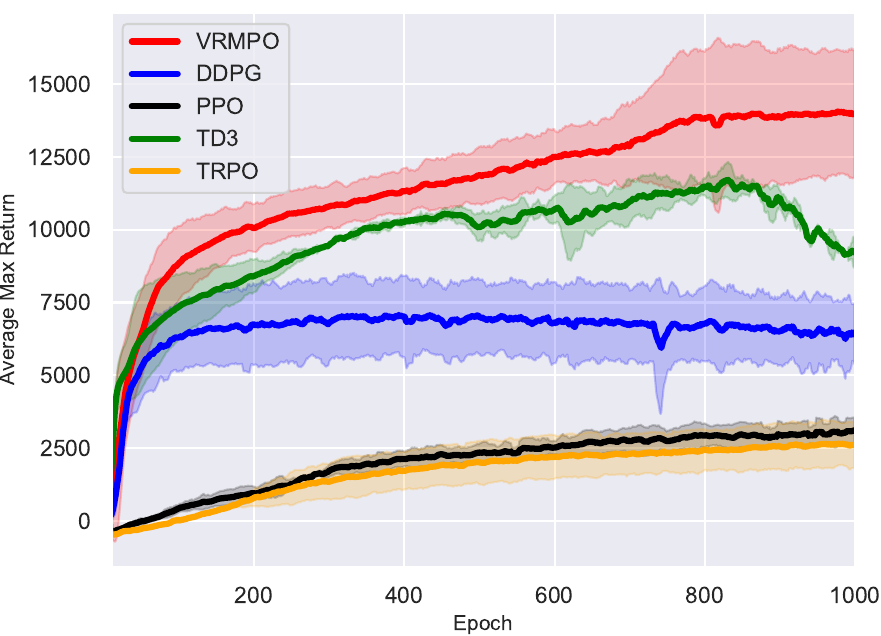}}
    \subfigure[Reacher-v2]
    {\includegraphics[width=5cm,height=4cm]{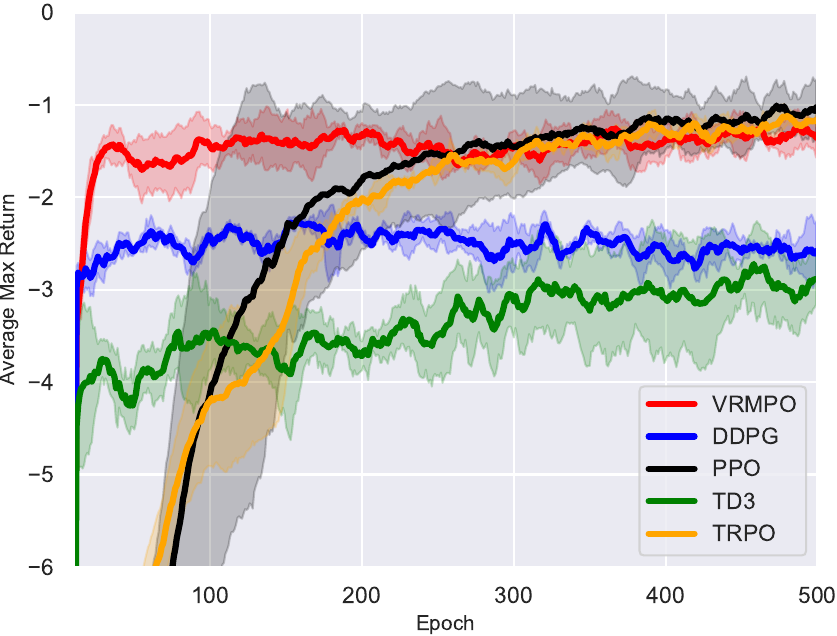}}
    \subfigure[Hopper-v2]
    {\includegraphics[width=5cm,height=4cm]{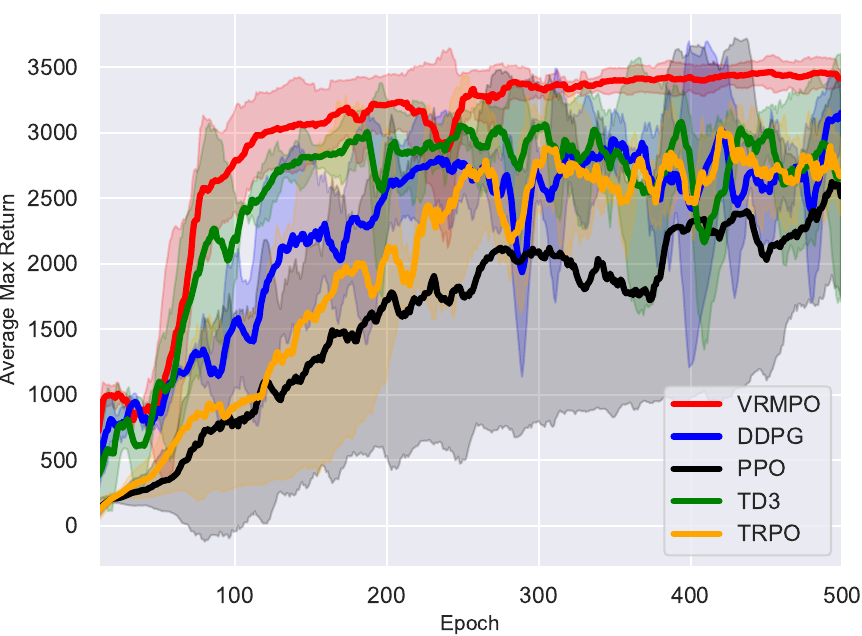}}
    \subfigure[InvDoublePendulum-v2]
    {\includegraphics[width=5cm,height=4cm]{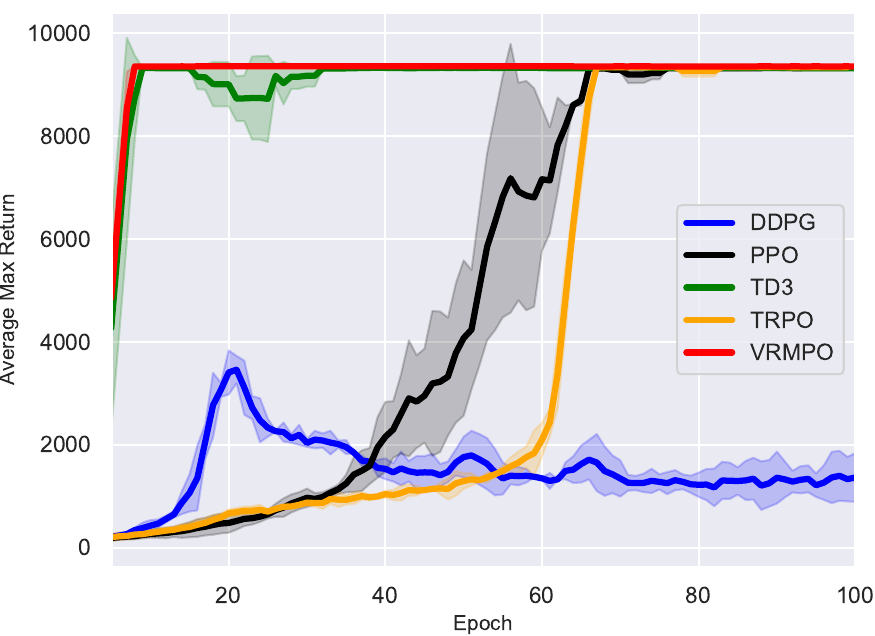}}
    \subfigure[InvPendulum-v2]
    {\includegraphics[width=5cm,height=4cm]{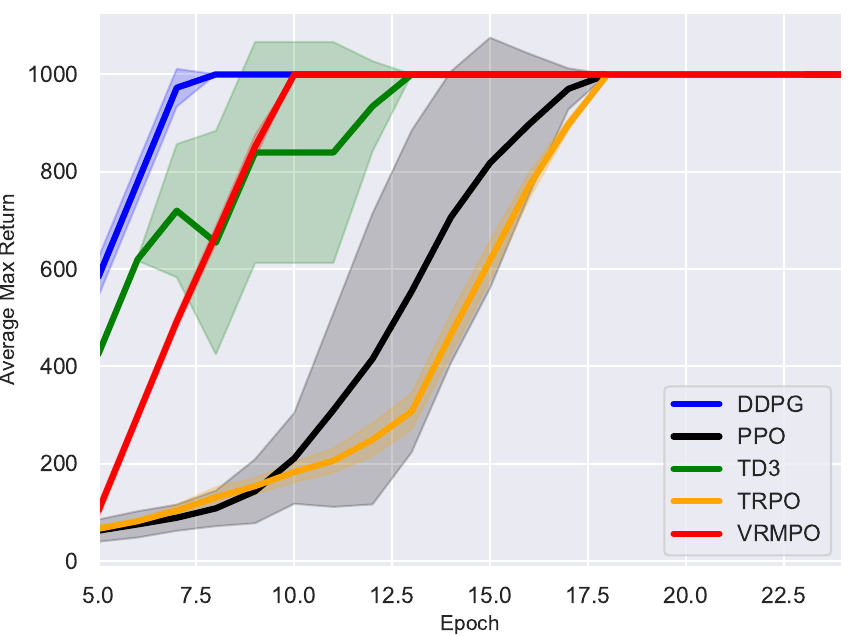}}
    \caption
    {
        Learning curves of max-return over the epoch, where we run 5000 iterations for each epoch. The shaded region represents the standard deviation of the test score over the best 3 trials. Curves are smoothed uniformly for visual clarity.
    }
      \label{figure-7}
\end{figure*}

\subsection{E.1: Some Practical Tricks for an On-line Implementation of VRMPO}
\label{app-ex:on-line-vrmpo}
In this section, we present the details of the practical tricks for an on-line implementation of $\mathtt{VRMPO}$ that extends the previous Algorithm \ref{alg:SVR-MPD}. 
We have provided a complete implementation in Algorithm \ref{on-line VRMPO}.

It is noteworthy that the policy gradient (\ref{Eq:G_t}) of $\mathtt{VRMPO}$ is an off-line (i.e., known as the Monte Carlo method) estimator as the standard $\mathtt{REINFOECE}$.
As pointed by \cite{sutton2018reinforcement}, $\mathtt{REINFOECE}$ converges asymptotically to a local minimum, but like all Monte Carlo methods it tends to learn slowly, 
to be inconvenient for continuous control tasks, and it is limited in the application to some complex domains. 
This could also happen in $\mathtt{VRMPO}$.
To eliminate above inconveniences and to gain the advantages of $\mathtt{VRMPO}$ to complex tasks, we introduce some practical tricks for on-line implementation of $\mathtt{VRMPO}$.

\textbf{(i)} Firstly, we extend Algorithm \ref{alg:SVR-MPD} to be an actor-critic \cite{konda2000actor} structure, i.e., we introduce a critic structure to Algorithm \ref{alg:SVR-MPD}.
Concretely, for each step $t$, we construct a critic network (an estimator of action-value) $Q_{\omega}(s,a)$ with the parameter $\omega$,
sample $\{(s_i,a_i)\}_{i=1}^{N}$ from a data memory $\mathcal{D}$,
and learn the parameter $\omega$ via minimizing the following fundamental critic loss:
\begin{flalign}
\label{Loss-of-critic-1}
L_{\omega}=\dfrac{1}{N}\sum_{i=1}^{N}(Q_{\omega_{k-1}}(s_i,a_i)-Q_{\omega}(s_i,a_i))^{2}.
\end{flalign}
For the complex real-world domains, we should tune necessitate meticulous hyper-parameter.
In order to improve sample efficiency, we draw on the technique of $\mathtt{Double~Q\text{-}learning}$~\cite{van2016deep} to $\mathtt{VRMPO}$.
For more details, please see $\mathtt{Line~17}$-$\mathtt{20}$ of Algorithm \ref{on-line VRMPO}.

For the implementation of critic network $Q_{\omega}(s,a)$, we use a two-layer feedforward neural network of 400 and 300 hidden nodes respectively, with rectified linear units (ReLU) between each layer, and a final tanh unit following the output of the critic network $Q_{\omega}(s, a)$. 
After coding the critic network $Q_{\omega}(s,a)$, we get the loss of critic $L_{\omega}$ (\ref{Loss-of-critic-1})/(\ref{Loss-of-critic}),
we use adaptive strategies to compute the step-size according to ADAptive Moment estimation (ADAM) \cite{kingma2015adam} to learning the parameter $\omega$.
Concretely, for (\ref{learning-para-of-critic}), we use Tensorflow to return the parameter $\omega$:
$\mathtt{tf.train.AdamOptimizer(learning\_rate=lr).minimize}(L_{\omega^{j}_{k-1,t-1}}(\omega))$, 
where the step-size $\mathtt{lr}$ is chosen by grid search from the set $\{0.1,0.01,0.004,0.008\}$.

\textbf{(ii)} Let $\mathcal{D}$ be the replay memory, in this section, we set the memory size $|\mathcal{D}|=10^{6}$.
For each pair $(s,a)\sim\mathcal{D}$, we conduct the following actor loss
\begin{flalign}
 L_{\theta}(s,a)=-\log\pi_{\theta}(s,a)\underbrace{(\min_{j=1,2}Q_{\omega^{j}_{k-1}}(s,a))}_{\text{Double Q-Learning~\cite{van2016deep}}}
\end{flalign} 
to replace $J(\theta)$.
Then, we calculate the ``noise'' gradient $\dfrac{1}{N_2}\sum_{j=1}^{N_2}(-g(\tau_j|\theta_{k,t})+g(\tau_j|\theta_{k,t-1}))$ as the following $\delta_{k,t}$
\begin{flalign}
\delta_{k,t}=\dfrac{1}{N_2}\sum_{i=1}^{N_2}(\nabla_{\theta}L_{\theta_{k,t}}(s_i,a_i)-\nabla_{\theta}L_{\theta_{k,t-1}}(s_i,a_i)),
\end{flalign}
where $\{(s_i,a_i)\}_{i=1}^{N_2}\sim\mathcal{D}$. For more details, please see $\mathtt{Line~13}$-$\mathtt{16}$ of Algorithm \ref{on-line VRMPO}.

For the implementation of policy $\pi$, we use a Gaussian estimator as follows,
\[\pi_{\theta}(a|s)=\dfrac{1}{\sigma_{\theta}\sqrt{2\pi}}\exp(-\dfrac{a-\mu_{\theta}(s)}{2\sigma^{2}_{\theta}}),\] 
where the logarithmic standard deviation estimator $\log \sigma_{\theta}$ as follows, we use a two layer feedforward neural network of 400 and 300 hidden nodes respectively, with rectified linear units (ReLU) between each layer, and a final tanh unit produces a scalar $\mathtt{net\_output\_value}$;
then we use \[\log \sigma_{\theta}=\mathtt{LOG\_STD\_MIN}+\dfrac{1}{2}(\mathtt{LOG\_STD\_MAX}-\mathtt{LOG\_STD\_MIN})(\mathtt{net\_output\_value}+1),\]
where $\mathtt{LOG\_STD\_MIN}=-20$ and $\mathtt{LOG\_STD\_MAX}=2$.
$\mu_{\theta}(s)=\mathtt{action\_space.high[0]}*\mathtt{net\_output\_value}$, which makes sure actions are in correct range.
In this step,  we use $\alpha_{k}=0.2$ for the iteration (\ref{app-C-theta-iteration}) and the mirror map $\psi$ is $\ell_2$-norm.

\subsection{E.2. Details of Implementation of Baseline Algorithms}
In this section, we provide all the details of implementation of baseline algorithms.
All algorithms, we set $\gamma=0.99$. For VRMPO, the learning rate is chosen by grid search from the set $\{0.1,0.01,0.004,0.008\}$, batch-size $N=100$.  Memory size $|\mathcal{D}|=10^{6}$. We run 5000 iterations for each epoch.

\textbf{DDPG} For the implementation of DDPG, we also use a two-layer feedforward neural network of 400 and 300 hidden nodes, respectively, with rectified linear units (ReLU) between each layer for both the actor architecture and critic architecture, and a final tanh unit following the output of the actor. The step-size of the actor architecture is $10^{-3}$, step-size of the critic architecture is $10^{-1}$, and batch size is $10^{2}$.

In this experiment of DDPG, we set the number of steps of interaction (state-action pairs) for the agent and the environment in each epoch to be 5000.
The replay size is $10^6$. To help exploration, we store enough data to train a model in the replay, and the starting time of training is 10000.
$\gamma=0.99$, the maximum episode is 1000. Both target network are updated with soft update $ \kappa= 0.005$

\textbf{TD3}  
For our implementation of TD3, we refer to the work \cite{fujimoto2018addressing} and \url{https://github.com/sfujim/TD3}.

We excerpt some necessary details about the implementation of TD3~\cite{fujimoto2018addressing}.
TD3 maintains a pair of critics along with a single actor. For each time step, we update the pair of critics towards the minimum target value of actions selected by the target policy:
\begin{flalign}
\nonumber
y &= r + \gamma \min_{i=1,2} Q_{\theta^{'}_i}(s^{'}, \pi_{\phi'}(s') + \epsilon), \\
\nonumber
& \epsilon \sim \text{clip}(\mathcal{N}(0, \sigma), -c, c).
\end{flalign}
Every $d$ iterations, the policy is updated with respect to $ Q_{\theta_1}$ following the deterministic policy gradient algorithm.
The target policy smoothing is implemented by adding $\epsilon \sim \mathcal{N}(0, 0.2)$ to the actions chosen by the target actor network, clipped to $(-0.5, 0.5)$, delayed policy updates consists of only updating the actor and target critic network every $d$ iterations, with $d=2$. While a larger $d$ would result in a larger benefit with respect to accumulating errors, for a fair comparison, the critics are only trained once per time step, and training the actor for too few iterations would cripple learning. Both target networks are updated with $\tau=0.005$.

\textbf{TRPO}  For implementation of TRPO, we refer an open source \url{https://spinningup.openai.com/en/latest/algorithms/trpo.html}.
Recall the basic problem of TRPO as follows,
\begin{flalign}
\label{problem-trpo}
\theta_{k+1}=\arg\max_{\theta}\mathcal{L}(\theta_k,\theta),~~\text{s.t.}~ D_{\text{KL}}(\pi_{\theta}|\pi_{\theta_k})\leq \delta,
\end{flalign}
where where ${\mathcal L}(\theta_k, \theta)$ is the surrogate advantage, a measure of how policy $\pi_{\theta}$ performs relative to the old policy $\pi_{\theta_k}$ using data from the old policy:  ${\mathcal L}(\theta_k, \theta)=\mathbb{E}_{(s,a)\sim\pi_{\theta_k}}\Big[\dfrac{\pi_{\theta}(a|s)}{\pi_{\theta_{k}}(a|s)}A^{\pi_{\theta_k}}\Big]$, $A^{\pi_{\theta_k}}$ is the advantage function estimator, and $D_{\text{KL}}(\pi_{\theta}|\pi_{\theta_k})$ is Kullback-Leibler (KL) divergence. Usually, to get an answer of (\ref{problem-trpo}) quickly, according to \cite{schulman2015trust}, we consider the following problem to approximate the original problem (\ref{problem-trpo}),
\begin{flalign}
\theta_{k+1}=\arg\max_{\theta} g^{\top}(\theta-\theta_k)~~\text{s.t.}~\dfrac{1}{2}(\theta-\theta_k)^{\top}H(\theta-\theta_k)\leq\delta,
\end{flalign}
where $g$ is a policy estimator, $H$ is the Hessian matrix with respect to $\mathcal{L}(\theta_k,\theta)$.
TRPO adds a modification to this update rule: a backtracking line search,
\[\theta_{k+1} = \theta_k + \alpha^j \sqrt{\dfrac{2 \delta}{g^T H^{-1} g}} H^{-1} g,\]
where $\alpha \in (0,1)$ is the backtracking coefficient, and $j$ is the smallest nonnegative integer such that $\pi_{\theta_{k+1}}$ satisfies the KL constraint and produces a positive surrogate advantage.

For the experiments, we run the parameter $\delta$ in the set $\{10^{-2},2\times10^{-2},4\times10^{-2},8\times10^{-2}\}$. We set $\gamma=0.995$ and the maximum episode to be 1000.
For the implementation of critic network, we use a two-layer feedforward neural network of 64 and 64 hidden nodes, respectively, with tanh activation between each layer, and the learning rate of critic network is $10^{-3}$. The actor is also Gaussian policy as same as our VRMPO in Appendix \ref{app-ex:on-line-vrmpo} and the learning rate of critic network is $10^{-2}$.
We use ADAM to learn both actor network and critic network.
For each  each epoch, we let the agent interact with the environment up to be $5\times10^{3}$. We run the backtracking coefficient in the set in the set $\{10^{-1},2\times10^{-1},4\times10^{-1},8\times10^{-1}\}$.

\textbf{PPO}. For the implementation of PPO, we refer to an open source \url{https://github.com/openai/baselines/tree/master/baselines}.

PPO develops TRPO, and its objective reduces to
\[
L(s,a,\theta_k,\theta) = \min\Big\{
\dfrac{\pi_{\theta}(a|s)}{\pi_{\theta_k}(a|s)}A^{\pi_{\theta_k}}(s,a), g( \epsilon,A^{\pi_{\theta_k}}(s,a))
\Big\},
\] where
$
g(\epsilon, A) =(1+\epsilon)A
$, if $A\ge0$; else, $g(\epsilon, A) =(1-\epsilon)A$.
In this experiment, we use the same actor-critic network as TRPO, and clip parameter $\epsilon=0.2$.

\end{document}